\newcommand{\manuallabel}[2]{\def\@currentlabel{#2}\label{#1}}
\numberwithin{equation}{section}
\newtheorem{theorem}{Theorem}
\newtheorem{lem}{Lemma}
\newtheorem{definition}{Definition}
\newtheorem{rem}{Remark}
\DeclareMathOperator{\sign}{sign}
\DeclareMathOperator{\rank}{rank}
\DeclareMathOperator{\rad}{rad}
\DeclareMathOperator{\tb}{2b}
\DeclareMathOperator{\mb}{mb}
\DeclareMathOperator{\diag}{diag}
\DeclareMathOperator{\tr}{\mathrm{Tr}}
\title{A Parameter-Free Two-Bit   Covariance   Estimator\\ with Improved Operator Norm Error Rate}
\author{Junren Chen\thanks{J. Chen is with Department of Mathematics, The University of Hong Kong. (email: \texttt{chenjr58@connect.hku.hk})}\and Michael K. Ng\thanks{M. K. Ng is with Department of Mathematics, Hong Kong Baptist University. (email: \texttt{michael-ng@hkbu.edu.hk})}}
\date{(\today)}
\begin{document}
    \maketitle
\begin{abstract}
     A covariance matrix estimator using {two bits per entry} was recently developed by Dirksen, Maly and Rauhut  
     [Annals of Statistics, 50(6), pp. 3538-3562]. The  estimator achieves near minimax operator norm rate for general sub-Gaussian distributions,  
     but also suffers from two downsides: theoretically, there is an essential gap on operator norm error between their estimator and sample covariance when the diagonal of the covariance matrix is dominated by only a few entries; practically, its performance heavily relies on  the dithering scale, which needs to be tuned according to some unknown parameters. In this work, we propose a new 2-bit covariance matrix estimator that simultaneously addresses both issues. Unlike the sign quantizer associated with uniform dither   in Dirksen {\em et al.}, we adopt a triangular dither prior to a 2-bit quantizer inspired by the multi-bit uniform quantizer. By employing dithering scales varying across entries, our estimator  enjoys an improved operator norm error rate that depends on the effective rank of the underlying covariance matrix rather than the ambient dimension {\color{black}(referred to as dimension-free bound)}, which matches that of sample covariance up to logarithmic factors. Moreover, our proposed method eliminates the need of {\it any} tuning parameter, as the dithering scales are entirely determined by the data. {\color{black}While our estimator requires a pass of all unquantized samples to determine the dithering scales, it can be adapted to the online setting where the samples arise sequentially.}
     Experimental results are provided to demonstrate the advantages of our estimators over the existing ones.  
     %. Remarkably, by halving the dithering scales, our estimator oftentimes achieves operator norm errors less than twice of the errors of sample covariance.    
\end{abstract}
\section{Introduction}
Given i.i.d. samples $\bm{X}_1,...,\bm{X}_n$     from a zero-mean random vector $\bm{X}$, a fundamental problem in multivariate analysis is to accurately estimate the covariance matrix $\bm{\Sigma}:=\mathbbm{E}(\bm{X}\bm{X}^\top)$, which frequently arises in  principle component analysis \cite{jolliffe2002principal}, regression analysis \cite{freedman2009statistical}, finance \cite{ledoit2003improved}, massive MIMO system \cite{maly2022new}, and so on. Despite the vast literature on covariance estimation, we study in this paper a less well-understood setting   where we can only access samples coarsely quantized to a small number of bits. This setting is of particular interest in signal processing  or distributed learning where it   is expensive or even impossible to acquire or transmit high-precision data (e.g., \cite{yang2023plug,chen2022high,danaee2022distributed,hanna2021quantization,zhang2017zipml,maly2022new}). Defining the $\psi_2$ norm and $L^2$ norm of a random variable $X$ respectively as $\|X\|_{\psi_2}=\inf\{t>0:\mathbbm{E}(\exp(\frac{X^2}{t^2}))\leq 2\}$ and $\|X\|_{L^2}=(\mathbbm{E}[X^2])^{1/2}$, we follow prior works (e.g., \cite{12-BEJHDcovariance,koltchinskii2017concentration,vershynin2018high,vershynin_2012}) and formulate sub-Gaussian random vector $\bm{X}$ in Definition \ref{def1}.   
\begin{definition}\label{def1}
    A  $d$-dimensional random vector $\bm{X}$ is $K$-sub-Gaussian if 
    \begin{equation}
        \label{KSG}
        \|\langle\bm{X},\bm{v}\rangle\|_{\psi_2}\leq K\|\langle\bm{X},\bm{v}\rangle\|_{L^2},~\forall~\bm{v}\in \mathbb{R}^{d}. 
    \end{equation}
    For given $K$, we use the shorthand $\hat{K}:=K\sqrt{\log K}$ throughout this work.
\end{definition}
      In this paper, we propose a 2-bit covariance estimator\footnote{Throughout this work, 2-bit estimator refers to the one that only relies on two bits from each entry of $\{\bm{X}_i\}_{i=1}^n$.} for general sub-Gaussian $\bm{X}$ and establish  {\it non-asymptotic guarantee} which indicates  the  required sample size   for achieving operator norm error lower than some desired accuracy.     
 
\subsection{Related Works}
Under a direct 1-bit sign quantizer, many works have investigated covariance estimation from $\{\sign(\bm{X}_i)\}_{i=1}^n$\footnote{Here, $\sign(\cdot)$ is the 1-bit (sign) quantizer defined as $\sign(a)=1$ (if $a\geq 0$) and $\sign(a)=-1$ (if $a<0$).} under the name of  ``arcsin-law" (e.g., \cite{jacovitti1994estimation,choi2016near,van1966spectrum,vershynin2018high}), but it is impossible to estimate the full covariance in this case as the magnitudes of the samples are completely lost.\footnote{Indeed, we can only hope to estimate the correlation matrix with all-ones diagonal (e.g., \cite{dirksen2022covariance}), {\color{black}and in this case the {\it non-asymptotic} error rate has  also been established under {\it Gaussian} samples \cite[Thm. 1]{dirksen2022covariance}}.} For estimating the full  covariance $\bm{\Sigma}$, as we pursue in this paper, one must introduce non-zero thresholds $\bm{\tau}_i \in \mathbb{R}^n$ called ``dither" before the 1-bit quantization and collect $\sign(\bm{X}_i+\bm{\tau}_i)$ instead. Such a technique   known as ``dithering", with its aid on signal reconstruction observed in engineering works \cite{roberts1962picture,limb1969design,jayant1972application} and theory established in \cite{gray1993dithered,schuchman1964dither},   recently again received much attention from researchers working on various estimation problems like compressed sensing, covariance estimation, matrix completion and reduced-rank regression (e.g., \cite{chen2022high,chen2023quantized,dirksen2019quantized,dirksen2021non,dirksen2022covariance,xu2020quantized,thrampoulidis2020generalized,jung2021quantized,chen2022quantizing}). Specifically, for covariance estimation, some works proposed to use random Gaussian dithering \cite{eamaz2021modified,eamaz2022covariance,eamaz2023covariance} or deterministic fixed dithering (e.g., \cite{chapeau2008fisher,fang2010adaptive}), but to our best knowledge, these two lines of works are restricted to {\it Gaussian samples and  do not establish non-asymptotic result}, thus provide little guarantee for the practical finite-sample setting. We will not further explain these works since our interest is on estimators for {\it sub-Gaussian} distributions with {\it non-asymptotic guarantee}.

The work most relevant to ours is Dirksen {\em et al.} \cite{dirksen2022covariance} who developed the first 2-bit covariance estimator that applies to sub-Gaussian samples and enjoys non-asymptotic near minimax error rate.
 Let $\{\bm{\tau}_{i1},\bm{\tau}_{i2}\}_{i=1}^n\sim\mathscr{U}[-\lambda,\lambda]^d$ be i.i.d. uniform dithers independent of the $K$-sub-Gaussian    $\{\bm{X}_i\}_{i=1}^n$, they proposed to quantize the $i$-th sample $\bm{X}_i$ to $\bm{\dot{X}}_{i1}=\lambda\cdot\sign(\bm{X}_i+\bm{\tau}_{i1})$ and $\bm{\dot{X}}_{i2}=\lambda\cdot\sign(\bm{X}_i+\bm{\tau}_{i2})$, thus only retaining two bits per entry. 
%\begin{gather}\label{1.1}
 %   \bm{\dot{X}}_{k1} =  \lambda \cdot\sign(\bm{X}_k + \bm{\tau}_{k1}) \\ 
  %  \bm{\dot{X}}_{k2} = \lambda \cdot\sign(\bm{X}_k + \bm{\tau}_{k2})\label{1.2}
%\end{gather}
Based on the observation  $\mathbbm{E}(\bm{\dot{X}}_{i1}\bm{\dot{X}}_{i2}^\top)=\bm{\Sigma}$ that holds if  $\lambda\geq \|\bm{X}_i\|_\infty$ \cite[Lem. 15]{dirksen2022covariance}, the 2-bit estimator in \cite{dirksen2022covariance} is defined as 
\begin{equation}\label{aosesti}
\begin{aligned}
    \bm{\widehat{\Sigma}} _{na}= \frac{1}{2n}\sum_{i=1}^n (\bm{\dot{X}}_{i1}\bm{\dot{X}}_{i2}^\top+\bm{\dot{X}}_{i2}\bm{\dot{X}}_{i1}^\top),\quad\text{with }\begin{cases}
        \bm{\dot{X}}_{i1}=\lambda\cdot \sign(\bm{X}_i+\bm{\tau}_{i1})\\
        \bm{\dot{X}}_{i2}=\lambda\cdot \sign(\bm{X}_i+\bm{\tau}_{i2})
    \end{cases},
    \end{aligned}
\end{equation}
where $\{\bm{\tau}_{i1},\bm{\tau}_{i2}\}_{i=1}^n\sim \mathscr{U}[-\lambda,\lambda]^d$.
The non-masked case of \cite[Thm. 4]{dirksen2022covariance} implies the following operator norm error rate that is in general near minimax optimal up to logarithmic factors (e.g., see their supplementary material \cite{dirksen2022supp}).  

\begin{theorem}\label{thm1}
    Suppose that $\bm{X}_1,...,\bm{X}_n$ are i.i.d. copies of the zero-mean $K$-sub-Gaussian random vector $\bm{X}\in \mathbb{R}^d$, we consider the 2-bit estimator $\bm{\widehat{\Sigma}}_{na}$ in (\ref{aosesti}) with uniform dithers $\bm{\tau}_{i1},\bm{\tau}_{i2}\sim \mathscr{U}[-1,1]^d$. If $\lambda^2= C(K)\log n\|\bm{\Sigma}\|_\infty$ for some large enough constant $C(K)$ depending on $K$, 
   then for any $t>0$, it holds with probability at least $1-e^{-t}$ that %{\color{black}[change log throughout]}
   $$\|\bm{\widehat{\Sigma}}_{na}- \bm{\Sigma}\| _{op}\lesssim _K \sqrt{\frac{d\|\bm{\Sigma}\|_{op}\|\bm{\Sigma}\|_{\infty}[\log (d)+t](\log n)^2}{n}}+\frac{d\|\bm{\Sigma}\|_{\infty}[\log(d)+t]\log(n)}{n}.$$
  Setting $t = 10\log(nd)$ yields that, with probability at least $1-2(nd)^{-10}$ that %{\color{red}[JR: not yet explain $\lesssim_K$?]}
    \begin{equation}\label{1.71}
        \|\bm{\widehat{\Sigma}}_{na}- \bm{\Sigma}\| _{op}\lesssim _K  \sqrt{\frac{d\|\bm{\Sigma}\|_{op}\|\bm{\Sigma}\|_\infty\log (nd) (\log n)^2}{n}}+ \frac{d\|\bm{\Sigma}\|_\infty \log n\log (nd)}{n}.
    \end{equation}
    Moreover, under the scaling of $n\gtrsim _K  \frac{d\|\bm{\Sigma}\|_\infty\log (nd)}{\|\bm{\Sigma}\|_{op}}$, $\|\bm{\widehat{\Sigma}}_{na}- \bm{\Sigma}\| _{op}\lesssim _K  \sqrt{\frac{d\|\bm{\Sigma}\|_{op}\|\bm{\Sigma}\|_\infty\log(nd)(\log n)^2}{n}}$ holds with high probability. 
\end{theorem}
The novel estimator in (\ref{aosesti}) represents a significant progress in covariance estimation, which has    already sparked interest and led to subsequent research endeavors: Chen {\it et al.} extended it to high-dimensional sparse case {\color{black}(with unknown sparsity pattern)} and heavy-tailed samples   in \cite[Sec. II]{chen2022high}, and also developed a multi-bit covariance estimator for heavy-tailed distribution with near minimax non-asymptotic guarantees in \cite[Sec. 3.1]{chen2022quantizing}; Yang {\it et al.} \cite{yang2023plug} extended it to the complex domain and applied to massive MIMO system; more recently, Dirksen and Maly proposed its tuning-free version by using data-driven dithering \cite{dirksen2023tuning}.

\subsection{Two Downsides of the Estimator  of \cite{dirksen2022covariance}}
Nonetheless, the estimator $\bm{\widehat{\Sigma}}_{na}$ also suffers from two frustrating downsides.

\paragraph{Theoretical Gap.} Although 
the rate in (\ref{1.71}) is near minimax over certain set of covariance matrices, it is essentially sub-optimal for covariance matrices whose diagonals are dominated by only a few entries, which can be mathematically captured  by $\tr(\bm{\Sigma})\ll d\|\bm{\Sigma}\|_\infty$. Indeed,
for  the behaviour of the sample covariance $\bm{\widehat{\Sigma}}=\frac{1}{n}\sum_{i=1}^n\bm{X}_i\bm{X}_i^\top$ on sub-Gaussian $\bm{X}_i$, the {\it dimension-free} operator norm error rate $$ O\left(\|\bm{\Sigma}\|_{op}\Big[\sqrt{\frac{r(\bm{\Sigma})}{n}}+\frac{r(\bm{\Sigma})}{n}\Big]\right)$$ depending on the {\it effective rank}  $r(\bm{\Sigma}):=\frac{\tr(\bm{\Sigma})}{\|\bm{\Sigma}\|_{op}}$, which could be much smaller than the ambient dimension $d$, has been established in the literature, see for instance  \cite[Prop. 3]{12-BEJHDcovariance}, \cite[Coro. 2]{koltchinskii2017concentration}, \cite[Thm. 9.2.4]{vershynin2018high}. This rate is   tight up to multiplicative constant for Gaussian samples \cite[Thm. 4]{koltchinskii2017concentration} and also optimal in a  minimax sense \cite[Thm. 2]{12-BEJHDcovariance}. To be more precise, we present Theorem \ref{lem3} which provides an incremental extension of \cite[Thm. 9.2.4]{vershynin2018high} towards independent samples that  may not be identically distributed (such generality will be useful in the proof of our main theorem). Note that it also slightly refines the dependence on $K$ by the recent results from  \cite{jeong2022sub}.

\begin{theorem}
    \label{lem3}
     Suppose that   $\bm{X}_1,...,\bm{X}_n$ are independent, zero-mean, $K$-sub-Gaussian $d$-dimensional random vectors sharing the same covariance matrix $\bm{\Sigma}\in \mathbb{R}$. Let $r(\bm{\Sigma}):=\frac{\tr(\bm{\Sigma})}{\|\bm{\Sigma}\|_{op}}$, then for any $u\geq 0$, with probability at least $1-3e^{-u}$ we have \begin{equation}\label{1.444}
        \left\|\frac{1}{n}\sum_{i=1}^n\bm{X}_i\bm{X}_i^\top-\bm{\Sigma}\right\|_{op}\leq C  \|\bm{\Sigma}\|_{op}\left(\sqrt{\frac{\hat{K}^2(r(\bm{\Sigma})+u)}{n}}+\frac{\hat{K}^2(r(\bm{\Sigma})+u)}{n}\right).
    \end{equation}
\end{theorem}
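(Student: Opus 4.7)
The plan is to follow the classical template used to prove Vershynin's Theorem 9.2.4 and, more importantly, Koltchinskii--Lounici's Corollary 2 cited in the excerpt (which already delivers the effective-rank dependence in the i.i.d.\ setting), while observing that the i.i.d.\ hypothesis enters only through a fixed-direction concentration step that extends unchanged to independent samples sharing a common covariance. The refined $\hat K$-dependence is then imported by invoking the sharpened sub-exponential inequality of \cite{jeong2022sub} at the end.

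Concretely, I would first use
\[
\left\|\frac{1}{n}\sum_{i=1}^n\bm{X}_i\bm{X}_i^\top - \bm{\Sigma}\right\|_{op} = \sup_{\bm{u}\in S^{d-1}}\left|\frac{1}{n}\sum_{i=1}^n\bigl(\langle\bm{X}_i,\bm{u}\rangle^2 - \bm{u}^\top\bm{\Sigma}\bm{u}\bigr)\right|
\]
and reparameterise via $\bm{w}=\bm{\Sigma}^{1/2}\bm{u}$ (restricting to $\mathrm{Range}(\bm{\Sigma})$ if $\bm{\Sigma}$ is singular), so that the supremum runs over the ellipsoid $\bm{\Sigma}^{1/2}S^{d-1}$. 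Its $\ell_2$-radius is $\sqrt{\|\bm{\Sigma}\|_{op}}$ while its Gaussian complexity is $\sqrt{\tr(\bm{\Sigma})}=\sqrt{\|\bm{\Sigma}\|_{op}\,r(\bm{\Sigma})}$, which is precisely where the effective-rank gain over an ambient-dimension bound originates. Setting $\bm{Y}_i=\bm{\Sigma}^{-1/2}\bm{X}_i$, Definition~\ref{def1} makes each $\bm{Y}_i$ isotropic and $K$-sub-Gaussian in the standard sense, and for a fixed $\bm{w}$ the summands $\langle\bm{Y}_i,\bm{w}\rangle^2-\|\bm{w}\|_2^2$ are independent, centred, and sub-exponential with $\psi_1$-norm $\lesssim \hat K^2\|\bm{w}\|_2^2$ uniformly in $i$; Bernstein then yields the standard mixed sub-Gaussian/sub-exponential tail at each $\bm{w}$.

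Upgrading to a uniform bound over $\bm{w}\in\bm{\Sigma}^{1/2}S^{d-1}$ is the main technical step, which I would handle via Talagrand-style generic chaining (equivalently, a two-level $\varepsilon$-net argument) on this ellipsoid in the $\ell_2$-metric; the computation is a direct transcription of the chaining step in the proof of \cite[Cor.~2]{koltchinskii2017concentration}, where the entropy integral scales with $\sqrt{r(\bm{\Sigma})}$ rather than $\sqrt{d}$. Summing the tails along the chain and converting deviation to concentration with a free parameter $u$ then produces the stated rate with failure probability $3e^{-u}$.

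The main obstacle---really a verification---is confirming that every step of Koltchinskii--Lounici's chaining argument survives under the independence-only hypothesis. Fortunately the chaining step is a deterministic statement about the geometry of $\bm{\Sigma}^{1/2}S^{d-1}$ coupled with the fixed-$\bm{w}$ Bernstein tail; neither depends on the $\bm{X}_i$ being identically distributed, only on the \emph{common} covariance (supplying the variance proxy $\|\bm{w}\|_2^2$) and the uniform sub-Gaussian constant $K$ (supplying the $\psi_1$-norm). Hence the proof transfers verbatim, and the refined $\hat K=K\sqrt{\log K}$ factor enters at the very end via the sharpened Bernstein inequality of \cite{jeong2022sub}.
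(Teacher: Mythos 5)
Your plan shares the paper's skeleton: whiten via $\bm{Y}_i=\bm{\Sigma}^{-1/2}\bm{X}_i$, pass from the supremum over $\mathbb{S}^{d-1}$ to a supremum over the ellipsoid $\bm{\Sigma}^{1/2}\mathbb{S}^{d-1}$ (restricting to $\mathrm{Range}(\bm{\Sigma})$ when it is singular), read off radius $\|\bm{\Sigma}\|_{op}^{1/2}$ and Gaussian width $\leq\sqrt{\tr(\bm{\Sigma})}$ as the source of the effective-rank gain, and observe that independence with a common covariance and a uniform $K$ is all that is used. The one place you diverge is the uniform bound over the ellipsoid: you propose to re-run a Bernstein-plus-generic-chaining argument for the quadratic process $\langle\bm{Y}_i,\bm{w}\rangle^2-\|\bm{w}\|_2^2$, whereas the paper invokes the matrix deviation inequality \cite[Cor.~1.2]{jeong2022sub} (Lemma~\ref{matrixd}) as a black box to control $\sup_{\bm{b}\in\bm{\Sigma}^{1/2}\mathbb{S}^{d-1}}\bigl|\|\bm{Yb}\|_2-\sqrt{n}\|\bm{b}\|_2\bigr|$ and then converts this to the quadratic statement by the elementary factoring $\|\bm{Yb}\|_2^2-n\|\bm{b}\|_2^2=\bigl(\|\bm{Yb}\|_2-\sqrt{n}\|\bm{b}\|_2\bigr)\bigl(\|\bm{Yb}\|_2+\sqrt{n}\|\bm{b}\|_2\bigr)$; the cited lemma already carries the refined $\hat K$ factor, so nothing needs to be retraced through a chaining computation. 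Both routes give the stated rate and failure probability; the paper's is shorter because the chaining is outsourced. One caveat worth flagging in your write-up: Koltchinskii--Lounici's Corollary~2 is a result for \emph{Gaussian} samples and its proof leans on Gaussian concentration and Gaussian comparison, not on a Bernstein-based chaining for a sub-exponential process, so ``the chaining step transfers verbatim'' should be read as ``an analogous sub-Gaussian generic-chaining argument works'' --- which is precisely the content of the matrix deviation inequality you would otherwise be re-deriving. This is a presentational rather than a substantive gap; your final bound would come out the same.
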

\begin{proof}
    The proof can be found in Appendix A.
    \end{proof}
Therefore, setting $u=10r(\bm{\Sigma})$ in (\ref{1.444}), 
the sample covariance satisfies a high-probability operator norm error bound $O\big(\sqrt{\frac{\tr(\bm{\Sigma})\|\bm{\Sigma}\|_{op}}{n}}+\frac{\tr(\bm{\Sigma})}{n}\big)$ that is essentially tighter than (\ref{1.71}) when $\tr(\bm{\Sigma})\ll d\|\bm{\Sigma}\|_{\infty}$. To the best of our knowledge, there has not been any attempt to close this gap between 2-bit covariance and the sample covariance based on full data.

\paragraph{Tuning Parameter.} 
More practically, while the behaviour of $\bm{\widehat{\Sigma}}_{na}$ heavily relies on a suitable dithering scale $\lambda$, this parameter is in general hard to tune, e.g.,  its theoretical choice depends on $K$ and $\|\bm{\Sigma}\|_\infty$. While $K$ is related to the sub-Gaussianity of the underlying distribution, the dependence on $\|\bm{\Sigma}\|_\infty$ appears more undesired since $\|\bm{\Sigma}\|_\infty$ could significantly vary in different instances. To alleviate this tuning issue, the recent work \cite{dirksen2023tuning} proposed a tuning-free variant of $\bm{\widehat{\Sigma}}_{na}$ that uses data-driven dithering to get rid of the dependence of dithering scales on $\|\bm{\Sigma}\|_\infty$. Particularly,  they added an additional sample $\bm{X}_0$ and   then used a uniform dither with scale $$\lambda_i=C(K)\left(\frac{1}{i}\sum_{k=0}^{i-1}\|\bm{X}_k\|_\infty\right)\sqrt{\log i}$$ for the $i$-th sample $\bm{X}_i$ before the 1-bit quantization, where $C(K)$ is a fixed constant depending only on  $K$ (but not on $\|\bm{\Sigma}\|_\infty$). More precisely, let $\{\bm{\tau}_{i1},\bm{\tau}_{i2}\}_{i=1}^n\stackrel{iid}{\sim}\mathscr{U}[-1,1]^d$ be independent of $\{\bm{X}_i\}_{i=0}^n$,  analogously to the construction in (\ref{aosesti}), their estimator is given by \footnote{Compared to $\bm{\widehat{\Sigma}}_{na}$, this estimator requires storing the additional $\{\lambda_i\}_{i=1}^n$ as 32-bit   data, while such memory is relatively minor compared to the savings of quantization \cite[Rem. 2]{dirksen2023tuning}; this remark also applies to the our main estimator $\bm{\widetilde{\Sigma}}$ in Section \ref{sec3} that additionally requires    $d$   full-precision scalars.}  $$\bm{\widehat{\Sigma}}_a=\frac{1}{2n}\sum_{i=1}^n(\bm{\dot{X}}_{i1}\bm{\dot{X}}_{i2}^\top+\bm{\dot{X}}_{i2}\bm{\dot{X}}_{i1}^\top),\quad\text{with }\begin{cases}
    \bm{\dot{X}}_{i1}=\lambda_i\cdot \sign(\bm{X}_i+\lambda_i\bm{\tau}_{i1})\\
    \bm{\dot{X}}_{i2}=\lambda_i\cdot\sign(\bm{X}_i+ \lambda_i\bm{\tau}_{i2})
\end{cases},$$
where $\{\bm{\tau}_{i1},\bm{\tau}_{i2}\}_{i=1}^n\sim \mathscr{U}[-1,1]^d$.
The main theorem of \cite{dirksen2023tuning} implies that \begin{equation}\label{guaranteeSigmaa}
    \|\bm{\widehat{\Sigma}}_{a}-\bm{\Sigma}\|_{op}\lesssim _K\sqrt{\frac{d\|\bm{\Sigma}\|_\infty\|\bm{\Sigma}\|_{op}\log^7(nd)\log^2n}{n}}+\frac{d\|\bm{\Sigma}\|_\infty \log^4 (nd)\log n}{n}
\end{equation}
holds with probability at least $1-2(nd)^{-10}$, which coincides with (\ref{1.71}) up to logarithmic factors. Nonetheless, $\bm{\widehat{\Sigma}}_{a}$ does not fully address the tuning issue since the  unknown multiplicative constant still depends on $K$ that could vary among different distributions. Besides, as will be shown by the numerical examples below,   $\bm{\widehat{\Sigma}}_a$ does not perform well empirically.

\subsubsection{Numerical Examples}
%\noindent{\textbf{Numerical Examples.}}
We pause to provide numerical results to demonstrate the aforementioned   weaknesses of existing 2-bit estimators. We fix $(n,d)=(500,10)$ and test Gaussian samples $\bm{X}_i\sim \mathcal{N}(0,\bm{\Sigma})$, for which the dithering scale is $\lambda=C\sqrt{\log n}$ in $\bm{\widehat{\Sigma}}_{na}$ and $\lambda_i=C \big( \frac{1}{i}\sum_{k=1}^{i-1}\|\bm{X}_k\|_\infty\big) \sqrt{\log i}$ in $\bm{\widehat{\Sigma}}_{a}$. Based on $50$ repetitions, we plot the curves of ``operator norm error v.s. $C$" in Figure \ref{fig1}, where we also include the results of sample covariance $\bm{\widehat{\Sigma}}$ and a new (non-adaptive) 2-bit estimator $\widetilde{\bm{\Sigma}}_{\tb}$ that we propose in Section \ref{sec2}. Let $$\bm{\Sigma}(a,b,c)=(a-b)\bm{I}_d+b\mathbf{1}\mathbf{1}^\top+(c-a)\bm{e}_1\bm{e}_1^\top  = \begin{bmatrix}
    c & b & b & \cdots & b \\
    b & a & b &\cdots & b \\
    b& b & a &\cdots & b \\
    \vdots &\vdots & \vdots & \ddots & \vdots \\
    b & b & b &\cdots & a 
\end{bmatrix}$$ be the covariance matrix with diagonal   being $[c,a,...,a]^\top$ and non-diagonal entries being $b$, we test the low-correlation case $\bm{\Sigma}=\bm{\Sigma}(1,0.2,1)$ in Figure \ref{fig1}(a) and the high-correlation case $\bm{\Sigma}=\bm{\Sigma}(1,0.9,1)$ in Figure \ref{fig1}(b). Further, we test $\bm{\Sigma}(1,0.2,10)$ (that changes the $(1,1)$-th entry of $\bm{\Sigma}(1,0.2,1)$ to 10) in Figure \ref{fig1}(c) to simulate the setting of $\tr(\bm{\Sigma})\ll d\|\bm{\Sigma}\|_\infty$. Comparing (a) and (c) in Figure \ref{fig1} clearly corroborates the performance gap between $\bm{\widehat{\Sigma}}_{na}$ and $\bm{\widehat{\Sigma}}$ under $\tr(\bm{\Sigma})\ll d\|\bm{\Sigma}\|_\infty$, as well as the dependence of ``optimal $C$" (for $\bm{\widehat{\Sigma}}_{na}$) on $\|\bm{\Sigma}\|_\infty$. Consistent with the theoretical progress made by \cite{dirksen2023tuning}, the optimal $C$ for $\bm{\widehat{\Sigma}}_a$ roughly remain in $[0.3,0.4]$, but its numerical performance is in general worse than $\bm{\widehat{\Sigma}}_{na}$, especially when the correlations are high  or $\tr(\bm{\Sigma})\ll d\|\bm{\Sigma}\|_\infty$ (Figure \ref{fig1}(b)-(c)).\footnote{\color{black}But we also note that this observation is reached under the circumstance where the non-adaptive estimators $\widehat{\bm{\Sigma}}_{\rm na}$ and $\widetilde{\bm{\Sigma}}_{\rm 2b}$ are well tuned; the tuning advantage of $\widehat{\bm{\Sigma}}_a$ should be gladly admitted. } We conjecture that this numerical degradation stems from the scaling $\sqrt{\log i}$   in the $\lambda_i$, which may facilitate theoretical analysis but could make the dithering scale of $\bm{X}_{i_1}$ considerably larger than $\bm{X}_{i_2}$ if $i_1\gg i_2$; this may not be sensible since there is   no reason to believe that the entries of $\bm{X}_{i_1}$ are of magnitudes greater than those of $\bm{X}_{i_2}$.

\begin{figure}[ht!]
    \centering
    \includegraphics[scale = 0.50]{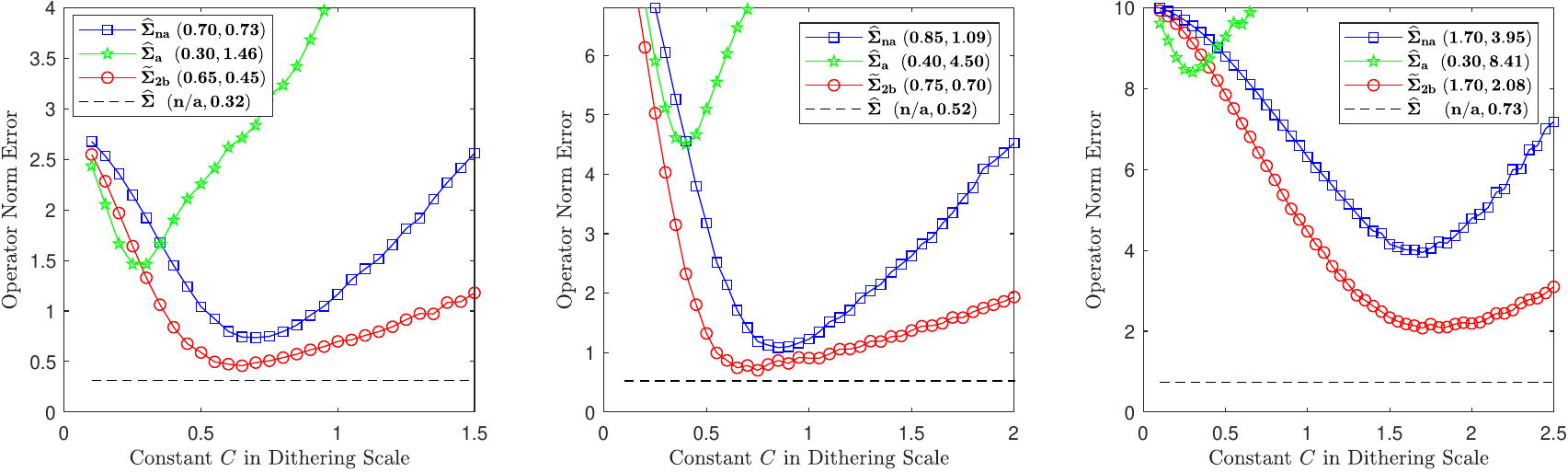}

     ~~~   (a) \hspace{4.3cm} (b)  \hspace{4.3cm} (c) 
    \caption{\footnotesize The curves of ``operator norm error v.s. $C$", with  the optimal $C$  and the corresponding minimum error  reported in the labels. We simulate $\bm{\Sigma} = \bm{\Sigma}(1,0.2,1)$ in (a), $\bm{\Sigma}=\bm{\Sigma}(1,0.9,1)$ in (b), and $\bm{\Sigma}=\bm{\Sigma}(1,0.2,10)$ in (c).} 
    \label{fig1}
\end{figure}
\subsection{Our Contributions}

In this work, we develop a new 2-bit covariance estimator to address the above two issues. Our quantization procedure is an essential departure from \cite{dirksen2023tuning,dirksen2022covariance} in the sense that we use triangular dither and a 2-bit quantizer (see (\ref{2.15}) below) reduced from the multi-bit uniform quantizer (see (\ref{uniformquan}) below). Indeed, our main estimator is built upon a new non-adaptive 2-bit estimator $\bm{\widetilde{\Sigma}}_{\tb}$ inspired by the multi-bit estimator developed in our prior work \cite{chen2022quantizing}; compared to $\bm{\widehat{\Sigma}}_{na}$, our new 2-bit estimator $\bm{\widetilde{\Sigma}}_{\tb}$ has comparable theoretical guarantee  and better numerical performance over Gaussian samples (see Figure \ref{fig1}). Then, we modify the proposed $\bm{\widetilde{\Sigma}}_{\tb}$ by using dithering scales that vary across different coordinates (i.e., entries) of $\bm{X}_i$ and are {\it entirely specified} by the given data. In particular, we use the largest magnitude of the $j$-th entries of all samples as the dithering scale for this coordinate; see $\lambda_{j,\min}$ in (\ref{3.1}) below.
This not only removes the tuning parameter but also allows for a dimension-free near-optimal operator norm error bound---our main theorem   presents a high-probability operator norm error rate $$O_K\left(\|\bm{\Sigma}\|_{op}\Big(\sqrt{\frac{r(\bm{\Sigma})\log ^2(nd)}{n}}+\frac{r(\bm{\Sigma})\log^2(nd)}{n}\Big)\right)$$ that matches   Theorem \ref{lem3} for sample covariance up to (a small number of) logarithmic factors. {\color{black}While determining the dithering scales $(\lambda_{j,\min})_{j=1}^d$ needs one pass of all unquantized samples, we also discuss  how to adjust our method to the online setting where the samples arise sequentially, see  $\widetilde{\bm{\Sigma}}_{on}$ (which achieves the same dimension-free error rate 
and requires tuning a constant depending only on $K$) in Section \ref{sec6} for instance.}

We conclude this section by fixing notations and providing an overview.

\paragraph{Notations.} We use regular letters to denote scalars, and boldface symbols are for vectors and matrices. For vector $\bm{a}=(a_i)\in\mathbb{R}^d$ we let $\|\bm{a}\|_\infty=\max_{1\leq i\leq d}|a_i|$ and $\|\bm{a}\|_2=(\sum_{i=1}^da_i^2)^{1/2}$; for matrix $\bm{A}=(a_{ij})\in \mathbb{R}^{d\times d}$ we let $\|\bm{A}\|_{op}$ be the operator norm and $\|\bm{A}\|_\infty=\max_{i,j\in[d]}|a_{ij}|$. Given symmetric $\bm{A},\bm{B}\in \mathbb{R}^{d\times d}$, we write $\bm{A}\preceq \bm{B}$ if $\bm{B}-\bm{A}$ is positive semi-definite.  Recall that $\|X\|_{\psi_2}$ has been defined for a random variable $X$ as $\|X\|_{\psi_2}=\inf\{t>0:\mathbbm{E}(\exp(\frac{X^2}{t^2}))\leq 2\}$ and $\|X\|_{L^2}=(\mathbbm{E}[X^2])^{1/2}$, now we further define $\|\bm{X}\|_{\psi_2}:=\sup_{\bm{v}\in \mathbb{S}^{d-1}}\|\langle \bm{v},\bm{X} \rangle\|_{\psi_2}$ for a random vector $\bm{X}\in \mathbb{R}^d
$, where $\mathbb{S}^{d-1}=\{\bm{v}\in \mathbb{R}^d:\|\bm{v}\|_2=1\}$ is the standard Euclidean sphere, $\langle\bm{a},\bm{b}\rangle=\bm{a}^\top\bm{b}$ is the inner product. 
We  use $C,c,C_1,C_2,...$ to denote absolute constants whose values may vary from line to line. We   denote the zero-mean multivariate Gaussian distribution with covariance matrix $\bm{\Sigma}$ by $\mathcal{N}(0,\bm{\Sigma})$, and   the uniform distribution over   $W\subset \mathbb{R}^d$ by  $\mathscr{U}(W)$.
Given two terms $T_1$ and $T_2$, we write $T_1\lesssim T_2$ or $T_1=O(T_2)$ if $T_1\leq CT_2$ for some $C$, and conversely write $T_1\gtrsim T_2$ or $T_1=\Omega(T_2)$ if $T_1\geq cT_2$ for some $c$. We use $T_1\asymp T_2$ to state that $T_1=O(T_2)$ and $T_1=\Omega(T_2)$ simultanesouly hold. We may also use notations such as $\lesssim_K,~\gtrsim _K,~O_K(\cdot)$ to indicate that the implied constants can depend on $K$.  In this work, the quantizers operate
on vectors in an element-wise manner. In our problem setting, we let $X_{ij}$ be the $j$-th entry of the sample $\bm{X}_i$, $\Sigma_{ij}$ be the $(i,j)$-th entry of the underlying covariance $\bm{\Sigma}$.

\paragraph{Overview.} We propose a new 2-bit estimator $\bm{\widetilde{\Sigma}}_{\tb}$ in Section \ref{sec2} that is comparable to $\bm{\widehat{\Sigma}}_{na}$. Our primary contribution, a parameter-free estimator with improved error rate, is presented in Section \ref{sec3}, where we also provide numerical examples to validate our theory. The proof of our main theorem (Theorem \ref{maintheorem}) is given in Section \ref{sec4}.  The paper is concluded with some discussions in Section \ref{sec6}. We  collect the deferred proofs in Appendices A-C that follow the concluding section.

\section{A New 2-bit Covariance Matrix Estimator}\label{sec2}
\subsection{A Multi-Bit Covariance Estimator} %Motivated by the quantized covariance matrix estimator developed   in the pre-print \cite{chen2022quantizing}, we first propose a new 2-bit covariance matrix estimator $\bm{\widehat{\Sigma}}_3$ for sub-Gaussian random vector, which utilizes the same number of bits as $\bm{\widehat{\Sigma}}_1$ but has quite different design. We will establish theoretical guarantees for $\bm{\widehat{\Sigma}}_3$ comparable to $\bm{\widehat{\Sigma}}_1$, while will also surprisingly uncover  via simulations that $\bm{\widehat{\Sigma}}_3$ already notably outperforms $\bm{\widehat{\Sigma}}_1$. 
Using the uniform quantizer associated with triangular dithers, a multi-bit covariance estimator was developed in \cite[Sec. 3.1]{chen2022quantizing}. We shall introduce this estimator (denoted by $\bm{\widetilde{\Sigma}}_{\mb}$) first in order to inspire the new 2-bit estimator $\bm{\widetilde{\Sigma}}_{\tb}$.

 We need some necessary preliminaries of the dithered uniform quantizer to get started. The (multi-bit) uniform quantizer with resolution $\lambda$ is defined for a real scalar as \begin{equation}\label{uniformquan}
     \mathcal{Q}_\lambda(a)=\lambda\Big(\Big\lfloor \frac{a}{\lambda}\Big\rfloor+\frac{1}{2}\Big),~~a\in \mathbb{R}.
 \end{equation} It is now well-understood that dithering prior to uniform quantization benefits signal recovery (or parameter estimation) from different aspects (e.g., \cite{chen2022quantizing,sun2022quantized,xu2020quantized,chen2023quantized,thrampoulidis2020generalized,jung2021quantized}); throughout this work, the dithering vectors (i.e., dithers) $\bm{\tau}_i$'s are independent of everything else and have independent entries.\footnote{For data-driven dithers with dithering scale depending on $\bm{X}_i$'s, we  will describe the procedure as drawing $\bm{\tau}_i$'s with constant scale (i.e., 1) and then rescaling the dithers (e.g., see our description for $\bm{\widehat{\Sigma}}_{a}$ above).}  Under uniform quantizer $\mathcal{Q}_\lambda(\cdot)$, while a uniform dither $\bm{\tau}_i\sim \mathscr{U}[-\frac{\lambda}{2},\frac{\lambda}{2}]^d$ is adopted in most prior works,
 the triangular dither defined as the sum of two {\it independent} uniform dithers \begin{equation}
     \label{triangulard}
     \bm{\tau}_i\sim \mathscr{U}\Big[-\frac{\lambda}{2},\frac{\lambda}{2}\Big]^d+\mathscr{U}\Big[-\frac{\lambda}{2},\frac{\lambda}{2}\Big]^d
 \end{equation} was adopted in \cite{chen2022quantizing,chen2023quantized} to allow for covariance estimation. Our subsequent developments hinge on the following properties of the dithered quantizer $\mathcal{Q}_\lambda(\cdot+\bm{\tau})$ with triangular dither. 

 \begin{lem}\label{lem1}
     For deterministic $\bm{X}\in \mathbb{R}^d$ and some $\lambda>0$, we let $\bm{\tau}\sim \mathscr{U}[-\frac{\lambda}{2},\frac{\lambda}{2}]^d+\mathscr{U}[-\frac{\lambda}{2},\frac{\lambda}{2}]^d$ be independent of $\bm{X}$ and quantize $\bm{X}$ to $\bm{\dot{X}}=\mathcal{Q}_\lambda(\bm{X}+\bm{\tau})$. Then the quantization noise $\bm{\xi}=\bm{\dot{X}}-\bm{X}$ satisfies the following: (i) $\|\bm{\xi}\|_\infty \leq \frac{3\lambda}{2}$, (ii) $\mathbbm{E}[\bm{\xi}]=0$, (iii) $\|\bm{\xi}\|_{\psi_2}=O(\lambda)$, and (iv) $\mathbbm{E}(\bm{\xi}\bm{\xi}^\top)=\frac{\lambda^2}{4}\bm{I}_d$. 
 \end{lem}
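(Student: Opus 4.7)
Since both $\mathcal{Q}_\lambda(\cdot)$ and the triangular dither $\bm{\tau}$ act entrywise and the coordinates of $\bm{\tau}$ are mutually independent, my plan is to reduce the entire statement to the scalar case and then reassemble. In the scalar setting I write $\xi = \mathcal{Q}_\lambda(X+\tau) - X = f(X+\tau) + \tau$ with the sawtooth $f(a):=\mathcal{Q}_\lambda(a)-a$; the pointwise bound $|f|\le \lambda/2$ together with $|\tau|\le\lambda$ immediately yields $|\xi|\le 3\lambda/2$, which gives both $\|\bm{\xi}\|_\infty\le 3\lambda/2$ and, because any bounded random variable is sub-Gaussian with $\psi_2$-norm controlled by its bound, $\|\bm{\xi}\|_{\psi_2}=O(\lambda)$.

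The heart of the argument is the conditional first and second moments in the scalar setting. I would Fourier-expand the $\lambda$-periodic, period-mean-zero sawtooth as $f(a)=\sum_{k\ne 0} c_k\, e^{2\pi i k a/\lambda}$ with $c_k=-i\lambda/(2\pi k)$, and exploit the characteristic function of the triangular dither, $\Phi_\tau(u)=\bigl(\sin(\lambda u/2)/(\lambda u/2)\bigr)^2$, which has a \emph{double} zero at each $u=2\pi k/\lambda$ with $k\ne 0$. Thus $\Phi_\tau(2\pi k/\lambda)=0$ and $\Phi_\tau'(2\pi k/\lambda)=0$ for every nonzero integer $k$. Conditioning on $X$, the expansion $\mathbbm{E}[\xi\mid X]=\mathbbm{E}[\tau]+\sum_{k\ne 0}c_k\, e^{2\pi i k X/\lambda}\,\Phi_\tau(2\pi k/\lambda)=0$ disposes of the mean. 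For the second moment, I would expand $\mathbbm{E}[\xi^2\mid X]=\mathbbm{E}[\tau^2]+2\,\mathbbm{E}[\tau f(X+\tau)\mid X]+\mathbbm{E}[f(X+\tau)^2\mid X]$: the first term is $\lambda^2/6$; the cross term is a linear combination of $\mathbbm{E}[\tau e^{2\pi i k\tau/\lambda}]=-i\,\Phi_\tau'(2\pi k/\lambda)=0$; and in the third term only the diagonal $k_2=-k_1$ contributions survive by the vanishing of $\Phi_\tau$ at nonzero frequencies, collapsing to $\sum_{k\ne 0}|c_k|^2=\frac{\lambda^2}{2\pi^2}\sum_{k\ge 1}k^{-2}=\lambda^2/12$. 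Adding gives $\mathbbm{E}[\xi^2\mid X]=\lambda^2/4$, independent of $X$.

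To assemble the vector statement I would use independence of the dither coordinates: the diagonal entries of $\mathbbm{E}[\bm{\xi}\bm{\xi}^\top]$ equal $\lambda^2/4$ by the scalar variance, while for $j\ne k$, given $\bm{X}$, $\xi_j$ depends only on $(X_j,\tau_j)$ and $\xi_k$ only on $(X_k,\tau_k)$, so the independence of $\tau_j$ and $\tau_k$ gives conditional independence and $\mathbbm{E}[\xi_j\xi_k\mid\bm{X}]=\mathbbm{E}[\xi_j\mid\bm{X}]\,\mathbbm{E}[\xi_k\mid\bm{X}]=0$ by the scalar mean-zero statement. The main technical obstacle is the Fourier manipulation: since $|c_k|=O(1/k)$ the Fourier series for $f$ is only conditionally convergent, so justifying the interchange of expectation and summation requires some care (e.g.\ via Abel summation, or by working with truncated Fourier approximations of $f$ and passing to the limit using dominated convergence together with the pointwise bound $|f|\le \lambda/2$). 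As a shortcut, one can simply invoke the classical Schuchman/Gray theorem for non-subtractive dithered quantization, which, for a second-order Schuchman dither such as triangular dither, directly asserts that the first two conditional moments of the quantization error are signal-independent and equal to $0$ and $\lambda^2/4$ respectively.
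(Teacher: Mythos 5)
Your proof is largely correct and, for the conditional moments, takes a more self-contained route than the paper. Where the paper invokes the Gray--Stockham characterization (their Theorem \ref{thm6}, adapted from \cite{gray1993dithered}) -- verifying the vanishing of the triangular dither's characteristic function $(\frac{2}{\lambda u}\sin\frac{\lambda u}{2})^2$ and the second-derivative condition, and then citing the resulting moment identities -- you reconstruct those identities directly by Fourier-expanding the sawtooth and exploiting the double zero of $\Phi_\tau$ at $u=2\pi k/\lambda$. Your computation is right (including the cross-term argument via $\Phi_\tau'=0$ and the diagonal survival in the $f^2$ expansion summing to $\lambda^2/12$), and the convergence caveat you raise is the right one; the paper sidesteps it by quoting the theorem, which you also note as a shortcut. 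The off-diagonal argument is identical to the paper's.

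There is, however, a genuine gap in the $\psi_2$ step. You argue that boundedness of each coordinate ($|\xi_j|\le 3\lambda/2$) gives $\|\bm{\xi}\|_{\psi_2}=O(\lambda)$, but the paper defines $\|\bm{\xi}\|_{\psi_2}:=\sup_{\bm{v}\in\mathbb{S}^{d-1}}\|\langle\bm{v},\bm{\xi}\rangle\|_{\psi_2}$, a \emph{vector} $\psi_2$-norm. Coordinatewise boundedness alone only yields the trivial bound $\|\bm{\xi}\|_{\psi_2}=O(\sqrt{d}\,\lambda)$ (take $\bm{v}$ with all entries $1/\sqrt{d}$ and dependent entries of $\bm{\xi}$ all pushed to $3\lambda/2$). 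To obtain the dimension-free $O(\lambda)$ you must additionally use that, conditionally on $\bm{X}$, the coordinates of $\bm{\xi}$ are independent and zero-mean -- facts you establish elsewhere but do not feed into this step. The paper handles this by conditioning on $\bm{X}$, applying the sub-Gaussianity of a random vector with independent bounded mean-zero entries (\cite[Lem.~3.4.2]{vershynin2018high}) to get $\|\bm{\xi}\mid\bm{X}\|_{\psi_2}=O(\lambda)$, and then averaging over $\bm{X}$ via the moment characterization $\|Z\|_{\psi_2}\asymp\sup_{p\ge 1}p^{-1/2}(\mathbbm{E}|Z|^p)^{1/p}$. Your proof needs that extra Hoeffding-type step to reassemble the scalar bound into the vector one.
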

\begin{proof}
    The proof can be found in Appendix B. 
\end{proof}

Lemma \ref{lem1} allows for a covariance estimator based on $\mathcal{Q}_\lambda(\cdot+\bm{\tau})$ with triangular dither $\bm{\tau}$. Specifically, we let $\{\bm{\tau}_i\}_{i=1}^n\stackrel{iid}{\sim}\mathscr{U}[-\frac{\lambda}{2},\frac{\lambda}{2}]^d+\mathscr{U}[-\frac{\lambda}{2},\frac{\lambda}{2}]^d$ be independent of $\{\bm{X}_i\}_{i=1}^n$, and then quantize $\bm{X}_i$ to $\bm{\dot{X}}_i=\mathcal{Q}_\lambda(\bm{X}_i+\bm{\tau}_i)$. Define $\bm{\xi}_i:=\bm{\dot{X}}_i-\bm{X}_i$, then we have 
\begin{equation}\begin{aligned}\label{2.1}
    \mathbbm{E}(\bm{\dot{X}}_i\bm{\dot{X}}_i^\top) &= \mathbbm{E}(\bm{X}_i\bm{X}_i^\top)+\mathbbm{E}(\bm{\xi}_i\bm{\xi}_i^\top)+\mathbbm{E}(\bm{X}_i\bm{\xi}_i^\top)+\mathbbm{E}(\bm{\xi}_i\bm{X}_i^\top)=\bm{\Sigma}+\frac{\lambda^2}{4}\bm{I}_d,
    \end{aligned}
\end{equation}
where we use $\mathbbm{E}(\bm{\xi}_i\bm{\xi}_i^\top)=\frac{\lambda^2}{4}\bm{I}_d$ and $\mathbbm{E} (\bm{\xi}_i)=0$ (conditioning on $\bm{X}_i$) from Lemma \ref{lem1} in the second equality. Equation (\ref{2.1}) motivates the estimator \cite{chen2022quantizing}  %\footnote{Following the conventions in the field (e.g., \cite{dirksen2019quantized}), we   refer to estimators based on data produced by a uniform quantizer as multi-bit estimator, in which the bit number collected from each entry may vary from case to case.} 
\begin{equation}\label{2.4}
    \bm{\widetilde{\Sigma}}_{\mb}= \frac{1}{n}\sum_{i=1}^n \bm{\dot{X}}_i\bm{\dot{X}}_i^\top- \frac{\lambda^2}{4}\bm{I}_d,\quad\text{with }\bm{\dot{X}}_i=\mathcal{Q}_\lambda(\bm{X}_i+\bm{\tau}_i),
\end{equation}
where $\{\bm{\tau}_i\}_{i=1}^n\sim \mathscr{U}[-\frac{\lambda}{2},\frac{\lambda}{2}]^d+\mathscr{U}[-\frac{\lambda}{2},\frac{\lambda}{2}]^d$.
Note that \cite[Sec. 3.1]{chen2022quantizing} focused on heavy-tailed $\bm{X}_i$ (assumed to have bounded fourth moments) and overcame the heavy-tailedness by incorporating an additional truncation step before the dithered quantization. As a result, it was  mentioned in Remark 1 therein that ``the   results for sub-Gaussian distributions can be analogously established and are also new to the literature." %Specifically, %let $X_{ij}$ be the $j$-th entry of $\bm{X}_i$,
%to adapt their results to  sub-Gaussian $\bm{X}_i$ satisfying  \footnote{Part of our results is based on the sub-Gaussian assumption formulated in (\ref{2.5}), which is more relaxed than (\ref{SGvec}) adopted in \cite{12-BEJHDcovariance,dirksen2022covariance,koltchinskii2017concentration} in the following sense: (\ref{SGvec}) holds for $\bm{X}_i$ with some $K$ implies that (\ref{2.5}) holds with $K' = K\sqrt{\|\bm{\Sigma}\|_{op}}$.}
%\begin{equation}\label{2.5}
 % \|\bm{X}_i\|_{\psi_2} := \sup_{\bm{x}\in \mathbb{S}^{d-1}}\|\langle \bm{X}_i,\bm{x}\rangle\|_{\psi_2}\leq K',
%\end{equation}
%we could simply remove the data truncation step before the dithered quantization    and then make some technical adjustments (e.g., using different versions of Bernstein's inequality). 
To facilitate further developments, we present here the sub-Gaussian counterpart of \cite[Thm. 3]{chen2022quantizing}. 

\begin{theorem}\label{thmmb}
    Suppose that   $\bm{X}_1,...,\bm{X}_n$ are i.i.d. copies of the zero-mean $K$-sub-Gaussian random vector $\bm{X}\in \mathbb{R}^d$. Given $\lambda>0$, using the triangular dithers $\bm{\tau}_i\sim \mathscr{U}[-\frac{\lambda}{2},\frac{\lambda}{2}]^d+\mathscr{U}[-\frac{\lambda}{2},\frac{\lambda}{2}]^d$, the estimator $\bm{\widetilde{\Sigma}}_{\mb} $ in (\ref{2.4})  satisfies \begin{equation}\nonumber
        \big\|\bm{\widetilde{\Sigma}}_{\mb}-\bm{\Sigma}\big\|_{op}\leq C\Big(\|\bm{\Sigma}\|_{op}+\frac{\lambda^2}{4}\Big)
\left(\sqrt{\frac{\hat{K}^2\big[r(\bm{\Sigma}+\frac{\lambda^2}{4}\bm{I}_d)+u\big]}{n}}+ \frac{\hat{K}^2\big[r(\bm{\Sigma}+\frac{\lambda^2}{4}\bm{I}_d)+u\big]}{n}\right)    \end{equation}
with probability at least $1-3e^{-u}$, where $r(\bm{\Sigma}+\frac{\lambda^2}{4}\bm{I}_d)=\frac{\tr(\bm{\Sigma})+{\lambda^2d}/{4}}{\|\bm{\Sigma}\|_{op}+{\lambda^2}/{4}}$ is the effective rank of $\bm{\Sigma}+\frac{\lambda^2}{4}\bm{I}_d$.
\end{theorem}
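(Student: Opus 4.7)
The plan is to reduce Theorem \ref{thmmb} directly to Theorem \ref{lem3} applied to the quantized samples $\bm{\dot{X}}_i$. Writing $\bm{\xi}_i=\bm{\dot{X}}_i-\bm{X}_i$ for the quantization noises and invoking Lemma \ref{lem1} conditionally on each $\bm{X}_i$, I would first verify the identity $\bm{\widetilde{\Sigma}}_{\mb}-\bm{\Sigma}=\frac{1}{n}\sum_{i=1}^n\bm{\dot{X}}_i\bm{\dot{X}}_i^\top-\bm{\Sigma}'$ with $\bm{\Sigma}':=\bm{\Sigma}+\tfrac{\lambda^2}{4}\bm{I}_d$; this is exactly the centering computation already carried out in (\ref{2.1}), and it also shows that the i.i.d.\ vectors $\bm{\dot{X}}_i$ are zero-mean with common covariance $\bm{\Sigma}'$, whose operator norm and effective rank are $\|\bm{\Sigma}\|_{op}+\tfrac{\lambda^2}{4}$ and $r(\bm{\Sigma}+\tfrac{\lambda^2}{4}\bm{I}_d)$, respectively.

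The one genuine check is that $\bm{\dot{X}}_i$ is itself $O(K)$-sub-Gaussian in the sense of Definition \ref{def1} \emph{with respect to the inflated covariance} $\bm{\Sigma}'$. For any unit $\bm{v}\in\mathbb{S}^{d-1}$, the triangle inequality for the $\psi_2$-norm combined with the $K$-sub-Gaussianity of $\bm{X}_i$ and the bound $\|\bm{\xi}_i\|_{\psi_2}=O(\lambda)$ from Lemma \ref{lem1} gives
\begin{equation*}
\|\langle\bm{v},\bm{\dot{X}}_i\rangle\|_{\psi_2}\le\|\langle\bm{v},\bm{X}_i\rangle\|_{\psi_2}+\|\langle\bm{v},\bm{\xi}_i\rangle\|_{\psi_2}\le K\sqrt{\bm{v}^\top\bm{\Sigma}\bm{v}}+C\lambda.
\end{equation*}
Since $\sqrt{\bm{v}^\top\bm{\Sigma}\bm{v}}+\lambda\le\sqrt{2(\bm{v}^\top\bm{\Sigma}\bm{v}+\lambda^2/4)}\cdot\sqrt{2}$, this is bounded by $K'\|\langle\bm{v},\bm{\dot{X}}_i\rangle\|_{L^2}$ for some $K'\lesssim K\vee 1$, and correspondingly $\hat{K}'\lesssim\hat{K}$ (absorbing absolute constants). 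Hence $\bm{\dot{X}}_i$ satisfies the hypotheses of Theorem \ref{lem3} with covariance $\bm{\Sigma}'$ and sub-Gaussian constant $O(K)$.

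Applying Theorem \ref{lem3} to $\bm{\dot{X}}_1,\dots,\bm{\dot{X}}_n$ and substituting $\|\bm{\Sigma}'\|_{op}=\|\bm{\Sigma}\|_{op}+\tfrac{\lambda^2}{4}$ and $r(\bm{\Sigma}')=r(\bm{\Sigma}+\tfrac{\lambda^2}{4}\bm{I}_d)$ then yields precisely the claimed deviation bound on $\bigl\|\bm{\widetilde{\Sigma}}_{\mb}-\bm{\Sigma}\bigr\|_{op}$ with the stated failure probability $3e^{-u}$. There is no serious obstacle: the only nontrivial step is the sub-Gaussian bookkeeping above, and even that is routine once one recognizes that both $\bm{X}_i$ and the quantization noise $\bm{\xi}_i$ contribute sub-Gaussian marginals of the right scale, so that the quantized vector is sub-Gaussian relative to its own (inflated) covariance up to an absolute constant.
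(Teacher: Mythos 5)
Your argument is correct and matches the paper's own proof: both reduce to Theorem \ref{lem3} by recognizing that the i.i.d. quantized samples $\bm{\dot{X}}_i=\bm{X}_i+\bm{\xi}_i$ are zero-mean with covariance $\bm{\Sigma}'=\bm{\Sigma}+\tfrac{\lambda^2}{4}\bm{I}_d$ and are $O(K)$-sub-Gaussian in the sense of Definition \ref{def1} (the paper phrases the last step slightly differently, comparing $\|\langle\bm{v},\bm{\dot X}_i\rangle\|_{\psi_2}$ with $\|\langle\bm{v},\bm{\dot X}_i\rangle\|_{L^2}$ directly, but the content is identical). One small arithmetic slip: your intermediate inequality $\sqrt{\bm{v}^\top\bm{\Sigma}\bm{v}}+\lambda\le 2\sqrt{\bm{v}^\top\bm{\Sigma}\bm{v}+\lambda^2/4}$ fails when $\lambda$ is large relative to $\sqrt{\bm{v}^\top\bm{\Sigma}\bm{v}}$, but the correct version $\sqrt{a}+\lambda\le 2\sqrt{2}\sqrt{a+\lambda^2/4}$ (or with constant $3$) holds and suffices, so the conclusion is unaffected.
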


\begin{proof}
In order to invoke Theorem \ref{lem3} to bound $\|\bm{\widetilde{\Sigma}}_{\mb}-\bm{\Sigma}\|_{op}$, we use (\ref{2.1}) and write $$\bm{\widetilde{\Sigma}}_{\mb}-\bm{\Sigma}=\frac{1}{n}\sum_{i=1}^ n \bm{\dot{X}}_i\bm{\dot{X}}_i^\top -\Big(\bm{\Sigma}+\frac{\lambda^2}{4}\bm{I}_d\Big)= \frac{1}{n}\sum_{i=1}^n \bm{\dot{X}}_i\bm{\dot{X}}_i^\top- \mathbbm{E}\big(\bm{\dot{X}}_i\bm{\dot{X}}_i^\top\big).$$  We let $\bm{\xi}_i= \mathcal{Q}_\lambda(\bm{X}_i+\bm{\tau}_i)-\bm{X}_i$ be the quantization noise,   then Lemma \ref{lem1} gives $\|\bm{\xi}_i\|_{\psi_2}=O(\lambda)$.  Thus, for any $\bm{v}\in \mathbb{R}^{d}$ we have  \begin{equation}
    \|\langle\bm{\dot{X}}_i,\bm{v}\rangle\|_{\psi_2}\leq \|\langle\bm{X}_i,\bm{v}\rangle\|_{\psi_2}+\|\langle \bm{\xi}_i,\bm{v}\rangle\|_{\psi_2}\leq K\|\langle \bm{X}_i,\bm{v}\rangle\|_{\psi_2}+O(\lambda\|\bm{v}\|_2). \label{2723}
\end{equation} Note that (\ref{2.1}) yields $$\|\langle\bm{\dot{X}}_i,\bm{v}\rangle\|_{L^2}=\sqrt{\bm{v}^\top\mathbbm{E}[\bm{\dot{X}}_i\bm{\dot{X}}_i^\top]\bm{v}}=\Omega(\|\langle \bm{X}_i,\bm{v}\rangle\|_{L^2})+\Omega(\lambda\|\bm{v}\|_2),$$  which together with (\ref{2723}) implies $\|\langle \bm{\dot{X}}_i,\bm{v}\rangle\|_{\psi_2}=O(K+1)\|\langle \bm{\dot{X}}_i,\bm{v}\rangle\|_{L^2}$. Since in Definition \ref{def1} it must hold that $K=\Omega(1)$, $\bm{\dot{X}}_i$ is $O(K)$-sub-Gaussian. The result follows by invoking Theorem \ref{lem3}.
\end{proof}

\subsection{A New 2-Bit Covariance Estimator}

Having introduced the multi-bit estimator $\bm{\widetilde{\Sigma}}_{\mb}$, we propose a new 2-bit estimator  by properly restricting the  number of bits needed in the uniform quantizer. We begin with several useful observations. For the quantization of a scalar $a$, we first note  that the uniform quantizer $\mathcal{Q}_\lambda(\cdot)$ reduces to the 1-bit quantizer $\sign(\cdot)$ for signal $a$ bounded by the resolution $\lambda$, up to a scaling factor of $\frac{\lambda}{2}$: 
\begin{equation}\label{1brela1}
    \mathcal{Q}_\lambda(a)= \frac{\lambda}{2}\sign(a),\quad\text{when }|a|<\lambda.
\end{equation}
This remains true when the quantizer is associated with a uniform dither $\tau\sim \mathscr{U}[-\frac{\lambda}{2},\frac{\lambda}{2}]$, when the signal $a$ is sufficiently bounded away from $\lambda$: 
\begin{equation}\label{1brela2}
    \mathcal{Q}_\lambda(a+\tau)=\frac{\lambda}{2}\sign(a+\tau),\quad\text{when }|a|<\frac{\lambda}{2}.
\end{equation}
Nonetheless, such relation between $\mathcal{Q}_\lambda(\cdot)$ and $\sign(\cdot)$ is no longer valid if a triangular dither $\tau\sim \mathscr{U}[-\frac{\lambda}{2},\frac{\lambda}{2}]+\mathscr{U}[-\frac{\lambda}{2},\frac{\lambda}{2}]$ is adopted. Actually, as a triangular dither (see (\ref{triangulard})) takes value   on $[-\lambda,\lambda]$, for a non-constant signal $a$, the dithered signal $a+\tau$ falls on more than two bins of $\mathcal{Q}_\lambda(\cdot)$ with positive probability, making it impossible to reduce $\mathcal{Q}_\lambda(\cdot)$ to 1-bit quantization. Fortunately, under triangular dither $\tau\sim \mathscr{U}[-\frac{\lambda}{2},\frac{\lambda}{2}]+\mathscr{U}[-\frac{\lambda}{2},\frac{\lambda}{2}]$, we can still reduce $\mathcal{Q}_\lambda(\cdot)$  to a 2-bit quantizer for signal $a$ bounded by $\lambda$:
\begin{equation}\label{2.14}
\mathcal{Q}_\lambda(a+\tau)=\mathcal{Q}_{\lambda,\tb}(a+\tau),\quad\text{when }|a|<\lambda,
    \end{equation}
 where $\mathcal{Q}_{\lambda,\tb}(\cdot)$ is defined as\begin{equation}\label{2.15}
 \mathcal{Q}_{\lambda,\tb}(a):= -\frac{3\lambda}{2}\mathbbm{1}(a<-\lambda)-\frac{\lambda}{2}\mathbbm{1}(-\lambda\leq a<0)+\frac{\lambda}{2}\mathbbm{1}(0\leq a<\lambda)+\frac{3\lambda}{2}\mathbbm{1}(a\geq \lambda).
\end{equation}   
The relations (\ref{1brela1}), (\ref{1brela2}) and (\ref{2.14}) should be self-evident from the graphical illustration in Figure \ref{figadd}.
\begin{figure}[ht!]
    \centering
    \includegraphics[scale = 0.75]{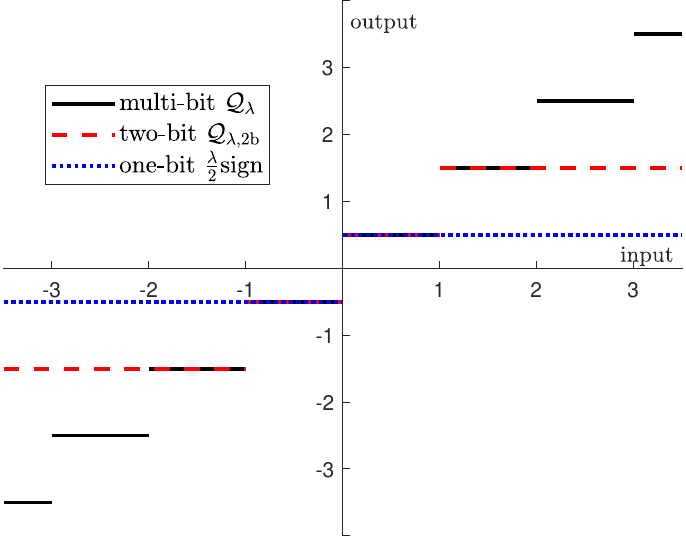}

    \caption{\footnotesize {\color{black}Graphical illustration of the multi-bit quantizer $\mathcal{Q}_{\lambda}(\cdot)$, the 2-bit quantizer $\mathcal{Q}_{\lambda,\tb}(\cdot)$ and the 1-bit quantizer $\frac{\lambda}{2}\sign(\cdot)$, under resolution $\lambda=1$.}} 
    \label{figadd}
\end{figure}

In essence, (\ref{2.14}) states that under triangular dither $\bm{\tau}_i$, $\mathcal{Q}_\lambda(\cdot+\bm{\tau}_i)$
and $\mathcal{Q}_{\lambda,\tb}(\cdot+\bm{\tau}_i)$ are close (or   the same) if the signal magnitude is dominated by (or exactly bounded by) $\lambda$. This inspires us to modify $\bm{\widetilde{\Sigma}}_{\mb}$ in (\ref{2.4}) and propose the following new non-adaptive 2-bit covariance matrix estimator $\bm{\widetilde{\Sigma}}_{\tb}$:
\begin{equation}\label{2.16}
    \bm{\widetilde{\Sigma}}_{\tb}= \frac{1}{n}\sum_{i=1}^n\bm{\dot{X}}_i\bm{\dot{X}}_i^\top - \frac{\lambda^2}{4}\bm{I}_d,\quad\text{with }\bm{\dot{X}}_i=\mathcal{Q}_{\lambda,\tb}(\bm{X}_i+\bm{\tau}_i),
\end{equation}
where $\{\bm{\tau}_i\}_{i=1}^n\sim \mathscr{U}[-\frac{\lambda}{2},\frac{\lambda}{2}]^d+\mathscr{U}[-\frac{\lambda}{2},\frac{\lambda}{2}]^d$.
%where $\bm{\dot{X}}_i$'s is the quantized samples that sample 2 bits per-entry from $\bm{\dot{X}}_i$ by 
%\begin{equation}
    %\label{2.17}\bm{\dot{X}}_i=\mathcal{Q}_{\lambda,\tb}(\bm{X}_i+ \bm{\tau}_i).
%\end{equation}
%Here, we adopt   triangular dithers 
%that are independent of $\bm{X}_i$'s:
%\begin{equation}\label{2.18}
%    \bm{\tau}_1,...,\bm{\tau}_n \stackrel{iid}{\sim}\mathscr{U}\big[-\frac{\lambda}{2},\frac{\lambda}{2}\big]^d+\mathscr{U}\big[-\frac{\lambda}{2},\frac{\lambda}{2}\big]^d.
%\end{equation}
%The theoretical guarantees are direct outcomes from Theorem \ref{thm3}.
\begin{theorem}\label{thm4}
    In the setting of Theorem \ref{thmmb}, we   let $\lambda^2=CK^2\|\bm{\Sigma}\|_\infty \log (nd)$ with some large enough $C$. Then with probability at least $1-10(nd)^{-10}-3\exp(-\frac{d}{2})$, the 2-bit estimator $\bm{\widetilde{\Sigma}}_{\tb}$ satisfies \begin{equation}
        \|\bm{\widetilde{\Sigma}}_{\tb}-\bm{\Sigma}\|_{op} \lesssim \sqrt{\frac{\hat{K}^2 d\|\bm{\Sigma}\|_\infty \|\bm{\Sigma}\|_{op}\log^2(nd)}{n}}+\frac{\hat{K}^2d\|\bm{\Sigma}\|_\infty \log (nd)}{n}.\label{34}
 \end{equation}
 Particularly, under the scaling of $n\gtrsim \frac{\hat{K}^2d\|\bm{\Sigma}\|_\infty}{\|\bm{\Sigma}\|_{op}}$, $$ \|\bm{\widetilde{\Sigma}}_{\tb}-\bm{\Sigma}\|_{op} \lesssim \sqrt{\frac{\hat{K}^2 d\|\bm{\Sigma}\|_\infty \|\bm{\Sigma}\|_{op}\log^2(nd)}{n}}$$ holds with high probability.
\end{theorem}
\begin{proof}    Our strategy is to show the two events \begin{gather*}
    \mathscr{E}_1=\Big\{\|\bm{\widetilde{\Sigma}}_{\mb}-\bm{\Sigma}\|_{op}\lesssim \sqrt{\frac{\hat{K}^2 d\|\bm{\Sigma}\|_\infty \|\bm{\Sigma}\|_{op}\log^2(nd)}{n}}+\frac{\hat{K}^2d\|\bm{\Sigma}\|_\infty \log (nd)}{n}\Big\}\\\text{and }~ \mathscr{E}_2= \big\{\bm{\widetilde{\Sigma}}_{\mb}=\bm{\widetilde{\Sigma}}_{\tb}\big\}
\end{gather*}      simultaneously hold with the promised probability.

    First, by invoking Theorem \ref{thmmb} with $$u =r\big(\bm{\Sigma}+\frac{\lambda^2}{4}\bm{I}_d\big)= \frac{\tr(\bm{\Sigma})+{\lambda^2d}/{4}}{\|\bm{\Sigma}\|_{op}+{\lambda^2}/{4}} \ge \frac{\lambda^2d/4}{2\max\{d\|\bm{\Sigma}\|_{\infty},\lambda^2/4\}} \ge \min\Big\{C_0K^2\log(nd),\frac{d}{2}\Big\}$$ for some large enough $C_0$,  we obtain that with probability at least $1-3(nd)^{-10}-3\exp(-\frac{d}{2})$, \begin{equation}
        \label{35}
        \|\bm{\widetilde{\Sigma}}_{\mb}-\bm{\Sigma}\|_{op} \lesssim \sqrt{\frac{\hat{K}^2d\|\bm{\Sigma}\|_\infty \|\bm{\Sigma}\|_{op}\log(nd)\big(1+\|\bm{\Sigma}\|_\infty\|\bm{\Sigma}\|_{op}^{-1}\log(nd)\big)}{n}}+\frac{\hat{K}^2d\|\bm{\Sigma}\|_\infty \log (nd)}{n}.
    \end{equation}
    By $\|\bm{\Sigma}\|_\infty\leq \|\bm{\Sigma}\|_{op}$, this implies $\mathscr{E}_1$.

    Second, we note that $\mathscr{E}_2$ holds if $\mathcal{Q}_{\lambda}(\bm{X}_i+\bm{\tau}_i)=\mathcal{Q}_{\lambda,\tb}(\bm{X}_i+\bm{\tau}_i) $ for any $i\in[n]$, and from  (\ref{2.14}) this can be implied by $\lambda \geq \max_{1\leq i\leq n}\|\bm{X}_i\|_\infty= \max_{i,j}|X_{ij}|$. As $\bm{X}_i$ is $K$-sub-Gaussian, for any $(i,j)$ we have $\|X_{ij}\|_{\psi_2}\leq K\sqrt{\Sigma_{jj}}\leq K\sqrt{\|\bm{\Sigma}\|_\infty}.$ Thus, for any $t\geq 0$ we have $$\mathbbm{P}\Big(|X_{ij}|\geq t\Big)\leq 2\exp\left(-\frac{C_1t^2}{K^2\|\bm{\Sigma}\|_\infty}\right).$$ Taking a union bound yields $$\mathbbm{P}\Big(\max_{i,j}|X_{ij}|\geq t\Big)\leq 2nd\exp\left(-\frac{C_1t^2}{K^2\|\bm{\Sigma}\|_\infty}\right).$$   Letting $t=C_2K\sqrt{\|\bm{\Sigma}\|_\infty\log(nd)}$ with large enough $C_2$, we obtain $\max_{i,j}|X_{ij}|=O(K\sqrt{\|\bm{\Sigma}\|_\infty\log(nd)})$ with probability at least $1-3(nd)^{-10}$. Under our choice $\lambda^2=CK^2\|\bm{\Sigma}\|_\infty\log(nd)$ with sufficiently large $C$, $\mathscr{E}_2$ thus holds with the promised probability. The proof is complete.
\end{proof}

Despite the distinct quantization processes and constructions, our new 2-bit estimator $\bm{\widetilde{\Sigma}}_{\tb}$ is indeed comparable to $\bm{\widehat{\Sigma}}_{na}$ in \cite{dirksen2022covariance} in the following senses:
\begin{itemize}
    \item[(i)] They both rely on two bits per entry; 
    \item[(ii)] They enjoy similar non-asymptotic operator norm error rate; c.f., (\ref{1.71}) and (\ref{34});
    \item[(iii)]   They both involve a tuning parameter of the dithering scale $\lambda$ which should be tuned according to $\|\bm{\Sigma}\|_\infty$ and $K$.
\end{itemize}
      Thus, like $\bm{\widehat{\Sigma}}_{na}$, our $\bm{\widetilde{\Sigma}}_{\tb}$ still suffers from the theoretical gap when $\tr(\bm{\Sigma})\ll d\|\bm{\Sigma}\|_\infty$ and the tuning issue. This can be numerically corroborated by comparing (a) and (c) of Figure \ref{fig1}, where we   include $\bm{\widetilde{\Sigma}}_{\tb}$ with $\lambda=C\sqrt{\log n}$ for comparison, and note that in (c) the optimal C for $\widetilde{\bm{\Sigma}}_{\tb}$ shifts to the right and exhibits larger performance gap compared to sample covariance. On the other hand, in all three numerical examples $\bm{\widetilde{\Sigma}}_{\tb}$ notably outperforms $\bm{\widehat{\Sigma}}_{na}$, which suggests that our new 2-bit estimator is preferable for Gaussian samples. %Nonetheless, we comment that 
%{\color{red} [A lower bound could strengthen this work.]}

%In the next two sections, we will propose two variants of $\bm{\widetilde{\Sigma}}$ that aim to alleviate/address this issue, with the core spirit of utilizing data-driven dithering scales varying among entries. Specifically, we use dithering scale $\lambda_j$ for the $j$-th entry that is adaptive to the signal magnitude $\{|X_{ij}|:i\in[n]\}$. This is an essential departure from the data-driven dithering strategy in \cite{dirksen2023tuning}, which employs dithering scales varying among different samples (\ref{1.9}) and cannot settle this issue.
%\subsection{Results under $L^2$-$\psi_2$ Equivalence}
 
\section{A Parameter-Free Estimator with Improved Error Rate}\label{sec3}
%Like $\bm{\widehat{\Sigma}}_1$, our new estimator $\bm{\widetilde{\Sigma}}$ also exhibits a larger performance gap from the   sample covariance in Figure \ref{fig3}(middle),   where $\bm{X}_i$ is an approximately low-rank distribution satisfying $\tr(\bm{\Sigma})\ll d\|\bm{\Sigma}\|_{op}$.
As an intuitive explanation on the gap between $\bm{\widehat{\Sigma}}_{na},\bm{\widetilde{\Sigma}}_{\tb}$ and the full-data-based sample covariance, it was written on \cite[P. 3544]{dirksen2022covariance} that ``in order to accurately estimate all diagonal entries of $\bm{\Sigma}$, the (maximal) dithering level $\lambda$ needs to be on the scale $\|\bm{\Sigma}\|_\infty$; if $\tr(\bm{\Sigma})\ll d\|\bm{\Sigma}\|_\infty$, then most of the diagonal entries are much smaller than $\|\bm{\Sigma}\|_\infty$, and hence $\lambda$ is on a sub-optimal scale for these entries." In a nutshell, the common dithering scale $\lambda$ should be large enough to accommodate the large entries and hence could be highly sub-optimal for other small entries. While \cite{dirksen2023tuning} deployed data-driving dithering, the dithering scale remains the same across entries of a specific sample, thus their estimator suffers from the same gap. Naturally, a possible strategy to overcome the limitation is to adopt dithering  scales adaptive to different entries of $\bm{X}_i$.

Along this idea, we propose in this section our main estimator that simultaneously resolves the issues of sub-optimal rate and tuning parameter. In notation, we let $\lambda_j$ be the dithering scale for the $j$-th entries $\{X_{ij}:i\in[n]\}$ and will quantize $X_{ij}$ to $\dot{X}_{ij}=\mathcal{Q}_{\lambda_j,\tb}(X_{ij}+\lambda_j\tau_{ij})$ with (normalised) triangular dither $\tau_{ij}\sim \mathscr{U}[-\frac{1}{2},\frac{1}{2}]+\mathscr{U}[-\frac{1}{2},\frac{1}{2}]$. We comment that an appropriate $\lambda_j$ should represent a reasonable trade-off between bias and variance. On one hand,     (\ref{2.14}) indicates that using $\lambda_j\ll |X_{ij}|$ induces large bias between $\mathcal{Q}_{\lambda_j,\tb}(\cdot)$ and $\mathcal{Q}_{\lambda_j}(\cdot)$, and as a consequence   our 2-bit estimator could deviate largely from its unbiased multi-bit counterpart. On the other hand,  $\lambda_j\gg 
|X_{ij}|$ is also sub-optimal and 
 results in large variance (or slow concentration) of the quantized samples. It seems that we shall choose $\lambda_j$ at a scale comparable to $\{|X_{ij}|:i\in[n]\}$. 
Fortunately,  as we will see, the  tightest unbiased choices
\begin{equation}\label{3.1}
    \lambda_{j,\min} = \max_{1\leq i\leq n}|X_{ij}|,~j=1,...,d
\end{equation} 
are sufficient for closing the theoretical gap. Additionally, such $\lambda_{j,\min}$ is   determined by the given data, so (\ref{3.1}) also completely addresses the tuning issue.

Now we are ready to precisely propose our estimator. Let the triangular dithers $\{\bm{\tau}_i\}_{i=1}^n\stackrel{iid}{\sim}\mathscr{U}[-\frac{1}{2},\frac{1}{2}]^d+ \mathscr{U}[-\frac{1}{2},\frac{1}{2}]^d$ be independent of $\{\bm{X}_i\}_{i=1}^n$, $$\bm{\Lambda}_{\min}=\diag(\lambda_{1,\min},...,\lambda_{d,\min})$$ where $\lambda_{j,\min}$ is given in (\ref{3.1}), then we define the quantizer $$\bm{\mathcal{Q}}_{\bm{\Lambda}_{\min},\tb}(\cdot):=(\mathcal{Q}_{\lambda_{1,\min},\tb}(\cdot),...,\mathcal{Q}_{\lambda_{d,\min},\tb}(\cdot))^\top$$ which element-wisely quantizes $\bm{X}_i$ to \begin{equation}\label{3.2}
    \begin{aligned}\bm{\dot{X}}_i &= \bm{\mathcal{Q}}_{\bm{\Lambda}_{\min},2b}(\bm{X}_i+\bm{\Lambda}_{\min}\bm{\tau}_i)\\
    :&=\Big(\mathcal{Q}_{\lambda_{1,\min},\tb}(X_{i1}+\lambda_{1,\min}\tau_{i1}),...,\mathcal{Q}_{\lambda_{d,\min},\tb}(X_{id}+\lambda_{d,\min}\tau_{id})\Big)^\top\\
    &\stackrel{(i)}{=}\Big(\mathcal{Q}_{\lambda_{1,\min}}(X_{i1}+\lambda_{1,\min}\tau_{i1}),...,\mathcal{Q}_{\lambda_{d,\min}}(X_{id}+\lambda_{d,\min}\tau_{id})\Big)^\top;
    \end{aligned}
\end{equation}
note that $(i)$ follows from (\ref{2.14}) and (\ref{3.1}). Then, as can be shown by some intermediate results in the proof of Theorem \ref{maintheorem}, we have $\mathbbm{E}(\bm{\dot{X}}_i\bm{\dot{X}}_i^\top)=\bm{\Sigma}+\frac{1}{4}\bm{\Lambda}_{\min}^2$. Therefore, 
we propose our main  estimator as \begin{equation}\label{pfesti}
    \bm{\widetilde{\Sigma}}=\frac{1}{n}\sum_{i=1}^n\bm{\dot{X}}_i\bm{\dot{X}}_i^\top-\frac{1}{4}\bm{\Lambda}_{\min}^2,\quad\text{with }\bm{\dot{X}}_i  = \bm{\mathcal{Q}}_{\bm{\Lambda},\tb}(\bm{X}_i+\bm{\Lambda}_{\min}\bm{\tau}_i),
\end{equation}
where $\{\bm{\tau}_i\}_{i=1}^n\sim \mathscr{U}[-\frac{1}{2},\frac{1}{2}]^d+ \mathscr{U}[-\frac{1}{2},\frac{1}{2}]^d.$

Our primary theoretical result is given as below. When omitting dependence on $K$ and a small number of logarithmic factors, there is no difference between our (\ref{desired}) for $\bm{\widetilde{\Sigma}}$ and (\ref{1.444}) for sample covariance.  
%Note that under (\ref{3.1}), the 2-bit quantizer in (\ref{3.2}) can be written as the uniform quantizer $\bm{\mathcal{Q}_\Lambda}(\cdot)=(\mathcal{Q}_{\lambda_{1,\min}}(\cdot),...,\mathcal{Q}_{\lambda_{d,\min}}(\cdot))^\top$. 

%Now we are ready to present our main theorem. 
\begin{theorem}\label{maintheorem}
    Suppose that $\bm{X}_1,...,\bm{X}_n$ are i.i.d. copies of the zero-mean $K$-sub-Gaussian random vector $\bm{X}\in \mathbb{R}^d$. For estimating $\bm{\Sigma}=\mathbbm{E}(\bm{X}\bm{X}^\top)$ with effective rank $r(\bm{\Sigma})=\frac{\tr(\bm{\Sigma})}{\|\bm{\Sigma}\|_{op}}$, under the scaling of $n=\Omega(\hat{K}^2\log(nd))$, 
     the proposed estimator $\bm{\widetilde{\Sigma}}$ satisfies 
    \begin{equation} 
         \|\bm{\widetilde{\Sigma}}-\bm{\Sigma}\|_{op} \lesssim K^2 \|\bm{\Sigma}\|_{op}\left(\sqrt{\frac{r(\bm{\Sigma})\log^2 (nd)}{n}}+ \frac{r(\bm{\Sigma})\log^2 (nd)}{n}\right)\label{desired}
    \end{equation}
    with probability at least $1-10(nd)^{-10}-10e^{-10r(\bm{\Sigma})}$. 
    Particularly,     if further assuming the scaling of $n\gtrsim r(\bm{\Sigma})\log^2(nd)$, with high probability we have \begin{equation}\label{desired2}
         \|\bm{\widetilde{\Sigma}}-\bm{\Sigma}\|_{op} \lesssim K^2 \sqrt{\frac{\mathrm{Tr}(\bm{\Sigma}) \|\bm{\Sigma}\|_{op}\log ^2(nd)}{n}}.
    \end{equation}
\end{theorem}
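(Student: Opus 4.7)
The proof proceeds by conditioning on $\mathcal{F} := \sigma(\bm{X}_1,\dots,\bm{X}_n)$ to decouple the quantization noise from the sample randomness, and decomposing the error into three pieces. The starting point is step $(i)$ of (\ref{3.2}): because $\lambda_{j,\min}=\max_i|X_{ij}|$, (\ref{2.14}) forces the 2-bit output $\bm{\dot{X}}_i$ to coincide almost surely with the multi-bit uniform quantizer $\mathcal{Q}_{\bm{\Lambda}}(\bm{X}_i+\bm{\Lambda}\bm{\tau}_i)$. Applying Lemma \ref{lem1} coordinate-wise then shows that, conditional on $\mathcal{F}$, the quantization errors $\bm{\xi}_i := \bm{\dot{X}}_i-\bm{X}_i$ are independent, zero-mean, a.s.\ bounded entry-wise by $\tfrac{3}{2}\lambda_{j,\min}$, share covariance $\tfrac{1}{4}\bm{\Lambda}^2$, and are $O(1)$-sub-Gaussian in the sense of Definition \ref{def1} (entries being independent sub-Gaussians with matching $\psi_2$ and $L^2$ scales). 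This identifies $\mathbbm{E}[\bm{\widetilde{\Sigma}}\mid\mathcal{F}]=\tfrac{1}{n}\sum_i\bm{X}_i\bm{X}_i^\top$ and motivates the split
\begin{equation*}
\bm{\widetilde{\Sigma}}-\bm{\Sigma} = \underbrace{\tfrac{1}{n}\textstyle\sum_i \bm{X}_i\bm{X}_i^\top - \bm{\Sigma}}_{T_2} \;+\; \underbrace{\tfrac{1}{n}\textstyle\sum_i(\bm{X}_i\bm{\xi}_i^\top + \bm{\xi}_i\bm{X}_i^\top)}_{T_3} \;+\; \underbrace{\tfrac{1}{n}\textstyle\sum_i\bigl(\bm{\xi}_i\bm{\xi}_i^\top - \tfrac{1}{4}\bm{\Lambda}^2\bigr)}_{T_4}.
\end{equation*}

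Two high-probability events are then set up. Since $\|X_{ij}\|_{\psi_2}\leq K\sqrt{\Sigma_{jj}}$, a union bound over $i\in[n], j\in[d]$ yields the event $\mathcal{E}_1$ on which $\lambda_{j,\min}\leq CK\sqrt{\Sigma_{jj}\log(nd)}$ uniformly in $j$, hence $\bm{\Lambda}^2 \preceq CK^2\log(nd)\,\diag(\Sigma_{11},\dots,\Sigma_{dd})$. This is the central observation that replaces the wasteful $d\|\bm{\Sigma}\|_\infty$ budget of $\bm{\widehat{\Sigma}}_{na}$ with $\tr(\bm{\Sigma})$: it gives $\|\bm{\Lambda}^2\|_{op}\lesssim K^2\log(nd)\|\bm{\Sigma}\|_{op}$ and, crucially, $\tr(\bm{\Lambda}^2)\lesssim K^2\log(nd)\tr(\bm{\Sigma})$. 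Theorem \ref{lem3} applied with $u\asymp r(\bm{\Sigma})$ defines $\mathcal{E}_2$, on which $T_2$ already satisfies the target rate and also $\|\tfrac{1}{n}\sum \bm{X}_i\bm{X}_i^\top\|_{op}\lesssim \|\bm{\Sigma}\|_{op}$. The term $T_4$ is handled by invoking Theorem \ref{lem3} again, this time conditional on $\mathcal{F}$ with the samples $\bm{\xi}_i$ (independent given $\mathcal{F}$, common covariance $\tfrac{1}{4}\bm{\Lambda}^2$, $O(1)$-sub-Gaussian); the identities $\|\bm{\Lambda}^2\|_{op}\sqrt{r(\bm{\Lambda}^2)}=\sqrt{\|\bm{\Lambda}^2\|_{op}\tr(\bm{\Lambda}^2)}$ and $\|\bm{\Lambda}^2\|_{op}\,r(\bm{\Lambda}^2)=\tr(\bm{\Lambda}^2)$, combined with the $\mathcal{E}_1$-bounds, deliver the desired rate for $T_4$ up to $(\log d)$-factors.

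The main obstacle is the cross term $T_3$: a naive $\epsilon$-net over $\mathbb{S}^{d-1}\times\mathbb{S}^{d-1}$ would introduce a factor of $d$, defeating the effective-rank refinement. My plan is to apply a matrix Bernstein inequality for independent sub-exponential random matrices to $\bm{M}_i := \bm{X}_i\bm{\xi}_i^\top$ conditionally on $\mathcal{F}$, after restricting to the further high-probability event $\{\|\bm{X}_i\|_2\lesssim K\sqrt{\tr(\bm{\Sigma})\log n}\ \forall i\}$ so that the summands become a.s.\ bounded by $R\lesssim K^2\log(nd)\tr(\bm{\Sigma})$. The two matrix-variance proxies are
\begin{equation*}
\bigl\|\textstyle\sum_i\mathbbm{E}[\bm{M}_i\bm{M}_i^\top\mid\mathcal{F}]\bigr\|_{op}=\tfrac{\tr(\bm{\Lambda}^2)}{4}\bigl\|\textstyle\sum_i\bm{X}_i\bm{X}_i^\top\bigr\|_{op}, \quad \bigl\|\textstyle\sum_i\mathbbm{E}[\bm{M}_i^\top\bm{M}_i\mid\mathcal{F}]\bigr\|_{op}=\tfrac{\|\bm{\Lambda}^2\|_{op}}{4}\textstyle\sum_i\|\bm{X}_i\|_2^2,
\end{equation*}
both of which are $\lesssim K^2\log(nd)\cdot n\,\tr(\bm{\Sigma})\|\bm{\Sigma}\|_{op}$ on $\mathcal{E}_1\cap\mathcal{E}_2$. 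Matrix Bernstein together with the assumption $\max\{n,K\}=O(d^{10})$ (which absorbs $\log(nd)$ into $\log d$) then yields $\|T_3\|_{op}\lesssim K^2\|\bm{\Sigma}\|_{op}\bigl(\log d\sqrt{r(\bm{\Sigma})/n}+(\log d)^2r(\bm{\Sigma})/n\bigr)$. Combining the three bounds on $\mathcal{E}_1\cap\mathcal{E}_2$ and a union bound over their failure probabilities give (\ref{desired}); (\ref{desired2}) follows by noting that under $n\gtrsim r(\bm{\Sigma})(\log d)^2$ the $\sqrt{r(\bm{\Sigma})/n}$ term dominates the $r(\bm{\Sigma})/n$ term in (\ref{desired}).
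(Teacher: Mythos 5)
Your proposal mirrors the paper's proof essentially step for step: the same three-term decomposition via $\bm{\dot{X}}_i=\bm{X}_i+\bm{\Lambda}\bm{\xi}_i$, the same conditioning on $\{\bm{X}_i\}$ to render the quantization noises independent, the same high-probability event bounding $\lambda_{j,\min}\lesssim K\sqrt{\Sigma_{jj}\log(nd)}$ and hence $\tr(\bm{\Lambda}^2)\lesssim K^2\log(nd)\,\tr(\bm{\Sigma})$ (which is indeed the key substitution that converts $d\|\bm{\Sigma}\|_\infty$ into $\tr(\bm{\Sigma})$), the same applications of Theorem~\ref{lem3} to $T_2$ and $T_4$ with $u\asymp r(\bm{\Sigma})$, and the same matrix Bernstein analysis of the cross term with exactly the paper's two matrix-variance proxies.

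Two small points worth flagging. First, on $\mathcal{E}_2$ you assert $\bigl\|\tfrac{1}{n}\sum_i\bm{X}_i\bm{X}_i^\top\bigr\|_{op}\lesssim\|\bm{\Sigma}\|_{op}$, but Theorem~\ref{lem3} only yields $\bigl\|\tfrac{1}{n}\sum_i\bm{X}_i\bm{X}_i^\top\bigr\|_{op}\lesssim\bigl(1+\hat{K}^2 r(\bm{\Sigma})/n\bigr)\|\bm{\Sigma}\|_{op}$, and the ratio $\hat{K}^2 r(\bm{\Sigma})/n$ is not $O(1)$ under the hypotheses of (\ref{desired}) alone; that scaling is imposed only for (\ref{desired2}). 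The paper keeps the full $\bigl(n+\hat{K}^2 r(\bm{\Sigma})\bigr)\|\bm{\Sigma}\|_{op}$ bound, which produces the extra term $K^2\hat{K}^2[\tr(\bm{\Sigma})]^2\log d/n^2$ in the variance proxy (\ref{sigmabound}); that contribution is in any event dominated by your $L\log d$ piece (since $\hat{K}\sqrt{\log d}\lesssim K\log d$ under $K=O(d^{10})$), so the final rate is unchanged, but the intermediate inequality as you stated it is not correct in full generality and should be replaced by the two-term bound. Second, your auxiliary event controlling $\max_i\|\bm{X}_i\|_2$ is redundant: $\max_i\|\bm{X}_i\|_2\leq\sqrt{\sum_j\max_i X_{ij}^2}=\sqrt{\tr(\bm{\Lambda}^2)}$ holds deterministically, so the almost-sure bound $L$ for matrix Bernstein already follows from $\mathcal{E}_1$ with no additional probability cost, which is the route the paper takes.
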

\begin{proof}[A Proof Sketch for Theorem \ref{maintheorem}]
      The key challenge in the proof is that the actual dithering vectors $\bm{\Lambda}_{\min}\bm{\tau}_i$ depend on $\{\bm{X}_i\}_{i=1}^n$ through the quantities $\{\lambda_{j,\min}\}_{j=1}^d$; thus, when simultaneously handling the randomness of $\{\bm{X}_i,\bm{\tau}_i\}_{i=1}^n$, $\{\bm{\dot{X}}_i\bm{\dot{X}}_i^\top\}_{i=1}^n$ are no longer i.i.d. matrices but    correlated in a rather delicate manner,   making the simple arguments in Theorem \ref{thmmb} invalid. To get started, we must decompose the quantized sample into  $\bm{\dot{X}}_i=\bm{X}_i+\bm{\xi}_i$, and this allows us to divide the estimation error into three pieces as in (\ref{4.44}). Recall that the properties of the quantization noise $\bm{\xi}_i$ are developed in Lemma \ref{lem1}.  To bound each piece separately, we often need to first deal with the randomness of $\{\bm{\tau}_i\}_{i=1}^n$ by conditioning on $\{\bm{X}_i\}_{i=1}^n$ to render ``independence", and then deal with $\{\bm{X}_i\}_{i=1}^n$ that are again independent. In addition, we   need to carefully take the technical tools of Theorem \ref{lem3} or matrix Bernstein's inequality (Lemma \ref{lem2}) to allow for a sharp rate depending on the effective rank $r(\bm{\Sigma})$ instead of $d$ (while some other approaches like covering argument may not suffice for this purpose). %It is also worth mentioning that we put some efforts on refining the dependence on sub-Gaussian parameter $K$ via the recent results of \cite{jeong2022sub}, see Lemma \ref{lemvectornorm} and Lemma \ref{matrixd}. 
    The complete proof is provided in Section \ref{sec4}. 
\end{proof}
\begin{rem}[Bias-Variance Tradeoff]\label{shrink} Recall that in $\bm{\widetilde{\Sigma}}$, our choice of dithering scales in (\ref{3.1}) is unbiased and   do not balance the bias and variance. Though this is sufficient for closing the theoretical error rate gap, numerically it is oftentimes preferable to slightly lower the dithering scales to allow for a better bias-and-variance trade-off. To this end, we propose to ``shrink'' the dithering scale by  a global factor $s\in(0,1)$ and 
use the dithering scales
$$s\bm{\Lambda}_{\min}={\rm diag}(s\lambda_{1,\min},...,s\lambda_{d,\min}).$$
This leads to the estimator \begin{equation}\label{shrinkageesti}
 \bm{\widetilde{\Sigma}}(s)=\frac{1}{n}\sum_{i=1}^n\bm{\dot{X}}_i\bm{\dot{X}}_i^\top-\frac{1}{4}(s\bm{\Lambda}_{\min})^2,\quad\text{with }    \bm{\dot{X}}_i=\bm{\mathcal{Q}}_{s\bm{\Lambda}_{\min},\tb}(\bm{X}_i+s\bm{\Lambda}_{\min}\bm{\tau}_i).
\end{equation}     Note that $\bm{\widetilde{\Sigma}}(1)=\bm{\widetilde{\Sigma}}$. {\color{black}  In practice, we shall advocate ``safely'' using $s$ slightly below $1$ (such as $s\in[0.8,0.9]$). While as we shall see from numerical simulation, $s=0.5$ is a generally good choice under Gaussian samples and further improves on $s=0.7,~0.9$.} 
\end{rem}
\begin{rem}
    [Online Setting] \label{online}{\color{black} One caveat of our estimator $\widetilde{\bm{\Sigma}}$ is that a full pass over all unquantized data is needed to set the dithering scales, thus precluding its applications in the online setting where the samples arrive at a sensor sequentially. We shall first emphasize the effectiveness of $\widetilde{\bm{\Sigma}}$ in the already ubiquitous offline settings where one can  find $(\lambda_{j,\min})_{j=1}^d$ before quantization, in that it allows for near-optimal covariance estimation from 2-bit data;    
    for instance, working with quantized data can reduce communication cost in many   distributed machine learning systems (e.g., see  \cite{chen2022quantizing}). Moreover, our main idea of using adaptive dithering scales for different coordinates goes beyond the specific estimator $\widetilde{\bm{\Sigma}}$, and we will adapt our method to the online setting in Section \ref{sec6} by proposing an estimator that enjoys the same error rate and involves   a  tuning parameter depending only on $K$ (thus requiring the same level of tuning as $\widetilde{\bm{\Sigma}}_a$ from \cite{dirksen2023tuning}). }   
\end{rem}
%\subsection{{\color{black}Adaptations in Online Setting}}

\subsection{Numerical Simulations}
In our simulations, we compare the operator norm errors of $\bm{\widehat{\Sigma}}_{na}$ in (\ref{aosesti}), our new 2-bit estimator $\bm{\widetilde{\Sigma}}_{\tb}$ in (\ref{2.16}), the sample covariance $\bm{\widehat{\Sigma}}$ based on full data, and our parameter-free estimator $\bm{\widetilde{\Sigma}}$ in (\ref{pfesti}). We also consider the estimators $\widetilde{\bm{\Sigma}}(s)$ with $s\in\{0.5,0.7,0.9\}$ using the reduced dithering scales $\{s\lambda_{j,\min}\}_{j=1}^d$; see
 Remark \ref{shrink}. %and particularly test
%$\bm{\widetilde{\Sigma}}(0.9),~\bm{\widetilde{\Sigma}}(0.7)~\text{and}~\bm{\widetilde{\Sigma}}(0.5)$. 

 We shall pause to state the tuning parameters for the above estimators. The non-adaptive estimators $\bm{\widehat{\Sigma}}_{na},\bm{\widetilde{\Sigma}}_{\tb}$ require a tuning parameter $\lambda=C\sqrt{\log n}$ for some $C$ depending on $(K,\|\bm{\Sigma}\|_\infty)$, and we provide the (near) optimal $C$ numerically found in the case of $(n,d)=(500,10)$,\footnote{For instance, from Figure \ref{fig1}(a), under $\bm{\Sigma}(1,0.2,1)$ we take $C=0.7$ for $\bm{\widehat{\Sigma}}_{na}$, $C=0.65$ for $\bm{\widetilde{\Sigma}}_{\tb}$.} but note that   it is hard to tune $C$ to such extent in practice. In contrast, our key estimator $\bm{\widetilde{\Sigma}}$ does not require any parameter, and its more ``balanced"
version  $\bm{\widetilde{\Sigma}}(s)$ is also user-friendly in that it only involves the parameter $s$ which can be manually set  to be moderately less than $1$ (like $s=0.9,~0.7,~0.5$ tested here). We will use $\bm{X}_i\sim \mathcal{N}(0,\bm{\Sigma})$ and average the estimation errors over 50 repetitions.   

\begin{figure}[ht!]
    \centering
    \includegraphics[scale = 0.52]{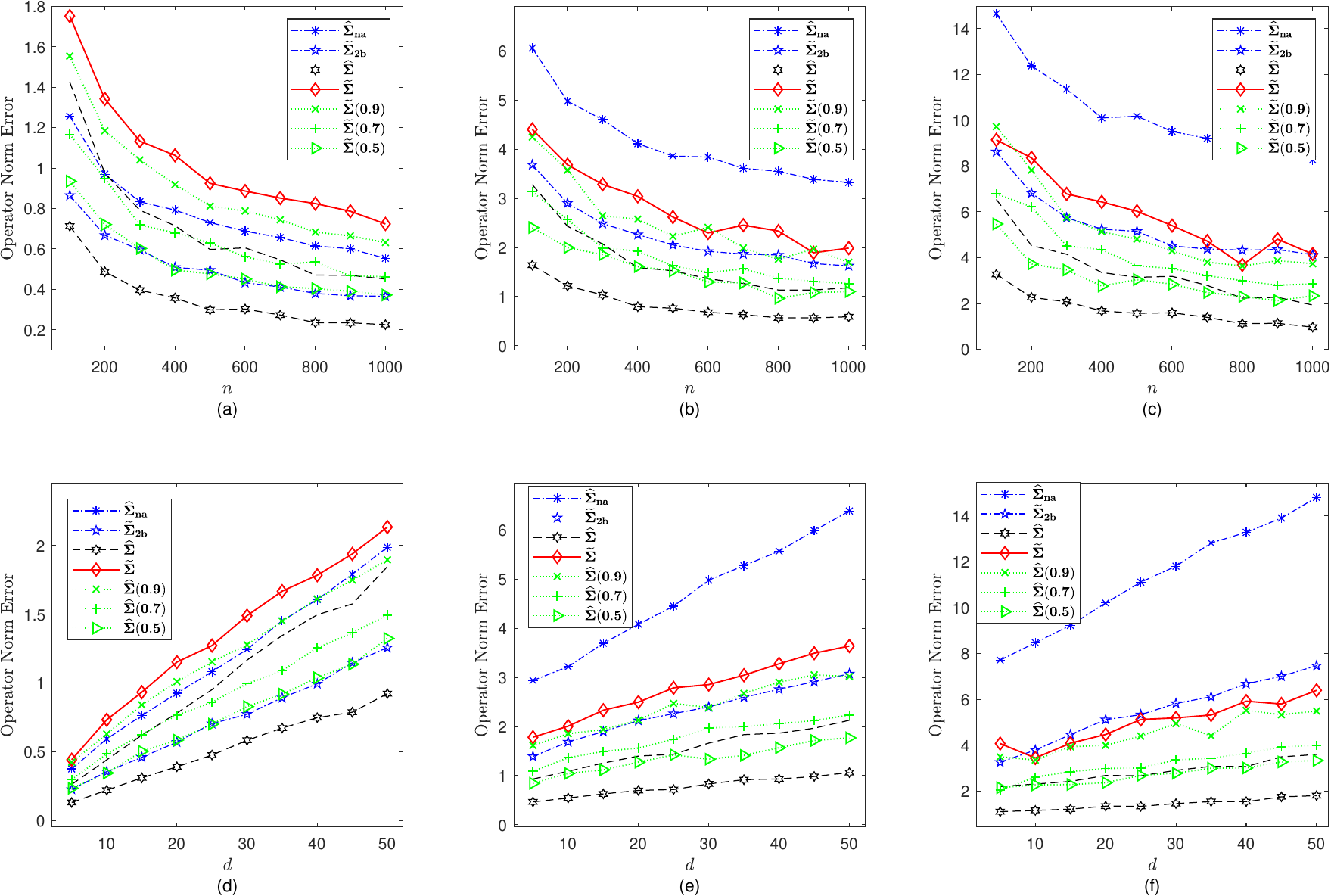}
    \caption{\footnotesize The curves of ``operator norm error v.s. $n$" and ``operator norm error v.s. $d$". We simulate $\bm{\Sigma} = \bm{\Sigma}(1,0.2,1)$ in (a) an (d), $\bm{\Sigma}=\bm{\Sigma}(1,0.2,10)$ in (b) and (e), and $\bm{\Sigma}=\bm{\Sigma}(1,0.2,25)$ in (c) and (f).} 
    \label{fig2}
\end{figure}

We first fix $d=10$ and test $n = 100:100:1000$, then report the results for $\bm{\Sigma}=\bm{\Sigma}(1,0.2,1)$, $\bm{\Sigma}(1,0.2,10)$ and $\bm{\Sigma}(1,0.2,25)$  in Figure \ref{fig2}(a)-(c), respectively.

For $\bm{\Sigma}=\bm{\Sigma}(1,0.2,1)$ that represents the case of $\tr(\bm{\Sigma}) =d\|\bm{\Sigma}\|_\infty$, in which 
the theoretical guarantees for 
$(\bm{\widehat{\Sigma}}_{na},\bm{\widetilde{\Sigma}}_{\tb},\bm{\widetilde{\Sigma}})$     almost coincide,  
$\bm{\widehat{\Sigma}}_{na}$ and $\bm{\widetilde{\Sigma}}_{\tb}$ with optimal $C$ achieve smaller errors than $\bm{\widetilde{\bm{\Sigma}}}$. While as expected, the dithering scales (\ref{3.1}) may be (slightly) numerically sub-optimal and proper shrinkage could further lower the errors of $\bm{\widetilde{\Sigma}}$. More specifically, $\bm{\widetilde{\Sigma}}(s)$ provides lower errors when decreasing s from 1 to 0.5, and  $\bm{\widetilde{\Sigma}}(0.5)$ performs comparably to $\bm{\widetilde{\Sigma}}_{\tb}$ with the best $C$.

To corroborate our   theoretical achievement, of more interest are the cases of $\bm{\Sigma}=\bm{\Sigma}(1,0.2,10)$ and $\bm{\Sigma}(1,0.2,25)$ where $\tr(\bm{\Sigma}) \ll d\|\bm{\Sigma}\|_\infty$, and hence   (\ref{desired}) for $\bm{\widetilde{\Sigma}}$ significantly improves on (\ref{1.71}) and (\ref{34}) for $\bm{\widehat{\Sigma}}_{na}$ and $\bm{\widetilde{\Sigma}}_{\tb}$. From Figure \ref{fig2}(b)-(c), $\bm{\widetilde{\Sigma}}$ already outperforms $\bm{\widehat{\Sigma}}_{na}$ and performs only slightly worse than $\bm{\widetilde{\Sigma}}_{\tb}$, although the latter two involve unrealistic tuning. By shrinking the dithering scales, $\bm{\widetilde{\Sigma}}(0.7)$ and $\bm{\widetilde{\Sigma}}(0.5)$ provide operator norm errors notably lower than  $\bm{\widetilde{\Sigma}}_{\tb}$, thus validating the benefit of  data-driven dithering with scales varying across coordinates.

To compare with sample covariance based on full data, we plot the {\it unlabeled black dashed curves that double the sample covariance's errors.} Remarkably, the curves of $\bm{\widetilde{\Sigma}}(0.5)$ are almost always below the black dashed ones. This indicates  that over Gaussian samples,  by properly shrinking the dithering scales in (\ref{3.1}), the proposed estimator oftentimes achieves operator norm  errors no greater than twice of the errors of sample covariance.

Furthermore, we also test a fixed $n=500$ and the increasing ambient dimension $d=5:5:50$ under the above three covariance matrices. The results with similar implications are provided in Figure \ref{fig2}(d)-(f).

\section{The Proof of Main Theorem}\label{sec4}
%{\color{red}[JR: start from here next time]}
The most important ingredient in our analysis is the following matrix Bernstein's inequality.

\begin{lem}\label{lem2}
    {\rm(Matrix Bernstein's Inequality, e.g., \cite[Thm. 6.1.1]{tropp2015introduction}).} Let $\bm{S}_1,\bm{S}_2,...,\bm{S}_n$ be independent, zero-mean, $d_1\times d_2$   random matrices such that $\|\bm{S}_i\|_{op}\leq L$ almost surely for all $i$. Let $$\sigma^2:=\max\left\{\Big\|\sum_{i=1}^n \mathbbm{E}(\bm{S}_i\bm{S}_i^\top)\Big\|_{op}, \Big\|\sum_{i=1}^n \mathbbm{E}(\bm{S}_i^\top\bm{S}_i)\Big\|_{op}\right\},$$ then for some absolute constant $c$, for any $t\geq 0$ we have 
    \begin{equation}\nonumber
\mathbbm{P}\left(\Big\|\sum_{i=1}^n\bm{S}_i\Big\|_{op}\geq t\right)\leq (d_1+d_2)\exp\left(-c\min\Big\{\frac{t^2}{\sigma^2},\frac{t}{L}\Big\}\right).
    \end{equation}
\end{lem}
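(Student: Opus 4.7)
The plan is to follow the Laplace transform / matrix CGF route pioneered by Ahlswede--Winter and refined by Tropp, beginning with a Hermitian dilation so that the rectangular hypothesis is replaced by a Hermitian one at the cost of enlarging the ambient dimension to $d_1+d_2$. Concretely, for each $\bm{S}_i$ define the self-adjoint dilation
\[
\bm{Y}_i := \begin{pmatrix} 0 & \bm{S}_i \\ \bm{S}_i^\top & 0 \end{pmatrix} \in \mathbb{R}^{(d_1+d_2)\times(d_1+d_2)}.
\]
Two properties make this transformation lossless for our purposes: $\|\bm{Y}_i\|_{op}=\|\bm{S}_i\|_{op}\leq L$ almost surely, and $\bm{Y}_i^2=\mathrm{diag}(\bm{S}_i\bm{S}_i^\top,\bm{S}_i^\top \bm{S}_i)$, so that $\bigl\|\sum_i\mathbbm{E}(\bm{Y}_i^2)\bigr\|_{op}=\sigma^2$. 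Moreover $\|\sum_i\bm{S}_i\|_{op}=\lambda_{\max}(\sum_i\bm{Y}_i)$, so it suffices to prove a one-sided tail bound for the top eigenvalue of the Hermitian sum $\bm{Z}:=\sum_{i=1}^n\bm{Y}_i$.

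The next step is the standard matrix Chernoff bound: for any $\theta>0$,
\[
\mathbbm{P}\bigl(\lambda_{\max}(\bm{Z})\geq t\bigr)\leq e^{-\theta t}\,\mathbbm{E}\,\mathrm{tr}\exp(\theta \bm{Z}).
\]
Here the key nontrivial ingredient is Lieb's concavity theorem, which implies (via Tropp's master inequality) the subadditivity of matrix cumulant generating functions,
\[
\mathbbm{E}\,\mathrm{tr}\exp\Bigl(\theta\sum_i \bm{Y}_i\Bigr)\leq \mathrm{tr}\exp\Bigl(\sum_i \log\mathbbm{E}\,e^{\theta \bm{Y}_i}\Bigr).
\]
This is the main obstacle, as Lieb's theorem is the deep analytic input; I would invoke it as a black box rather than reprove it. The remaining work is to bound each summand's matrix MGF. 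Using $\mathbbm{E}\,\bm{Y}_i=0$, $\|\bm{Y}_i\|_{op}\leq L$, the elementary scalar inequality $e^x\leq 1+x+\frac{x^2/2}{1-|x|/3}$ (valid for $|x|<3$) lifted to Hermitian matrices via the spectral theorem, and the operator monotonicity of $\log$, one obtains for $0<\theta<3/L$
\[
\log\mathbbm{E}\,e^{\theta\bm{Y}_i}\preceq \frac{\theta^2/2}{1-\theta L/3}\,\mathbbm{E}\,\bm{Y}_i^2.
\]

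Summing these PSD bounds and invoking monotonicity of $\mathrm{tr}\exp$ on Loewner order, together with the dimension estimate $\mathrm{tr}\exp(\bm{A})\leq (d_1+d_2)\exp(\lambda_{\max}(\bm{A}))$, I arrive at
\[
\mathbbm{P}\bigl(\lambda_{\max}(\bm{Z})\geq t\bigr)\leq (d_1+d_2)\exp\!\left(-\theta t+\frac{\theta^2 \sigma^2/2}{1-\theta L/3}\right)
\]
for every $\theta\in(0,3/L)$. Optimizing the exponent over $\theta$ (the standard Bernstein trade-off: choose $\theta=t/(\sigma^2+Lt/3)$) yields an exponent proportional to $-t^2/(\sigma^2+Lt)$, which splits into the two regimes $t^2/\sigma^2$ and $t/L$ and produces the stated bound with an absolute constant $c$. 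Since the $\lambda_{\max}$ of the Hermitian dilation equals the operator norm of $\sum_i \bm{S}_i$, no symmetrization or union bound is needed to pass from the one-sided eigenvalue tail to the operator-norm tail, which completes the argument.
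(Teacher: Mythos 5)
Your argument is correct: the Hermitian dilation preserves both the operator norm bound and the variance statistic and converts the rectangular problem into a one-sided tail bound for $\lambda_{\max}$, after which the Laplace-transform method, Lieb/Tropp subadditivity of the matrix cumulant generating function, the Bernstein-type bound $\log \mathbbm{E}\,e^{\theta \bm{Y}_i}\preceq \tfrac{\theta^2/2}{1-\theta L/3}\,\mathbbm{E}\,\bm{Y}_i^2$, and the choice $\theta=t/(\sigma^2+Lt/3)$ deliver exactly the stated inequality. The paper offers no proof of this lemma --- it quotes it from Tropp's monograph --- and your proposal is precisely the standard proof given in that reference, so there is nothing to reconcile.
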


We will also use the concentration of the norm of a sub-Gaussian vector. 
\begin{lem}\label{lem4}
    {\rm (Adapted from \cite[Thm. 4.1]{jeong2022sub})} Let $\bm{a}=(a_1,...,a_n)\in \mathbb{R}^n$ be a random vector with independent sub-Gaussian coordinates $a_i$ that satisfy $\mathbbm{E}(a_i^2)=1$. Let $A:=\max_{1\leq i\leq n}\|a_i\|_{\psi_2}$ and $\hat{A}=A\sqrt{\log A}$, then for some absolute constant $C$ we have $\big\|\|\bm{a}\|_2-\sqrt{n}\big\|_{\psi_2}\leq C\hat{A},$  which further implies the following for some $C_1$: \begin{equation}\nonumber
\mathbbm{P}\Big(\big|\|\bm{a}\|_2-\sqrt{n}\big|\geq t\Big) \leq 2\exp \left(-\frac{C_1t^2}{\hat{A}^2}\right),\quad\forall~t\geq 0.  
\end{equation}\label{lemvectornorm}
\end{lem}

We are ready to present the proof of our main theorem.
\begin{proof}[The proof of Theorem \ref{maintheorem}]

%We only need to prove (\ref{desired}) because (\ref{desired2}) is a  direct outcome of (\ref{desired}).
{\color{black}In this proof, we simply write $\bm{\Lambda}_{\min}=\diag(\lambda_{1,\min},...,\lambda_{d,\min})$ as $\bm{\Lambda}$.}
    To get started, we prove a useful observation: \begin{equation}
        \label{4.33}\bm{\dot{X}}_i=\bm{\mathcal{Q}}_{\bm{\Lambda},2b}(\bm{X}_i+\bm{\Lambda\tau_i})=\bm{\Lambda}\mathcal{Q}_1(\bm{\Lambda}^{-1}\bm{X}_i+\bm{\tau}_i).
    \end{equation} Note that the first equality  is from  (\ref{3.2}), and now let us prove the second equality. To this end, we start from the last line of (\ref{3.2}) and calculate the $j$-th entry of $\bm{\dot{X}}_i$ as\begin{equation}
      \begin{aligned}  \nonumber{\dot{X}}_{ij}&=\mathcal{Q}_{\lambda_{j,\min}}(X_{ij}+\lambda_{j ,\min} \tau_{ij})\\& = \lambda_{j,\min}\left(\left\lfloor \frac{X_{ij}+\lambda_{j,\min}\tau_{ij}}{\lambda_{j,\min}}\right\rfloor+\frac{1}{2}\right)\\
      & = \lambda_{j,\min}\left(\left\lfloor \frac{X_{ij}}{\lambda_{j,\min}}+\tau_{ij}\right\rfloor+\frac{1}{2}\right)= \lambda_{j,\min}\cdot\mathcal{Q}_1\left(\frac{X_{ij}}{\lambda_{j,\min}}+\tau_{ij}\right),
      \end{aligned}
    \end{equation} 
    then (\ref{4.33}) follows by noting that the last quantity in the above equation is just the $j$-th entry of $\bm{\Lambda}\mathcal{Q}_1(\bm{\Lambda}^{-1}\bm{X}_i+\bm{\tau}_i)$. To proceed,
    we define the quantization noise as  \begin{equation}
        \label{91quannoise}
\bm{\xi}_i=\mathcal{Q}_1(\bm{\Lambda}^{-1}\bm{X}_i+\bm{\tau}_i)-\bm{\Lambda}^{-1}\bm{X}_i
    \end{equation} and let $\xi_{ij}$ be its $j$-th entry. This  provides $\bm{\dot{X}}_i=\bm{\Lambda}\mathcal{Q}_1(\bm{\Lambda}^{-1}\bm{X}_i+\bm{\tau}_i)=\bm{X}_i+\bm{\Lambda}\bm{\xi}_i$, which further leads to \begin{equation}
        \begin{aligned}\label{4.44}
            \bm{\widetilde{\Sigma}}-\bm{\Sigma} &=\frac{1}{n}\sum_{i=1}^n\big(\bm{X}_i+\bm{\Lambda\xi}_i\big)\big(\bm{X}_i+\bm{\Lambda\xi}_i\big)^\top - \frac{1}{4}\bm{\Lambda}^2- \bm{\Sigma}\\
            &=\Big(\frac{1}{n}\sum_{i=1}^n\bm{X}_i\bm{X}_i^\top-\bm{\Sigma}\Big)+\Big(\frac{1}{n}\sum_{i=1}^n \bm{\Lambda}\bm{\xi}_i\bm{\xi}_i^\top\bm{\Lambda}^\top-\frac{1}{4}\bm{\Lambda}^2\Big)\\& ~~~+ \frac{1}{n}\sum_{i=1}^n\Big(\bm{X}_i\bm{\xi}_i^\top\bm{\Lambda}+\bm{\Lambda}\bm{\xi}_i\bm{X}_i^\top\Big):=I_1+I_2+I_3.
        \end{aligned}
    \end{equation}
    Thus, by triangular inequality we have \begin{equation}\label{finalback}
        \|\bm{\widetilde{\Sigma}}-\bm{\Sigma}\|_{op}\leq \|I_1\|_{op}+\|I_2\|_{op}+\|I_3\|_{op}.
    \end{equation}  %{\color{red} [Check the proof from here and calculate $\mathbbm{E}(\bm{\dot{X}}_i\bm{\dot{X}}_i^\top)$]}

    \subsubsection*{Step 1: Bounding $\|I_1\|_{op}$}

\noindent
     Theorem \ref{lem3}  gives that for any $u_1\geq 0$, with probability at least $1-3e^{-u_1}$ it holds that 
     \begin{equation}\nonumber
         \|I_1\|_{op}\leq C\|\bm{\Sigma}\|_{op}\left(\sqrt{\frac{\hat{K}^2[r(\bm{\Sigma})+u_1]}{n}}+\frac{\hat{K}^2[r(\bm{\Sigma})+u_1]}{n}\right).
     \end{equation}
Setting $u_1=10r(\bm{\Sigma})$,   we obtain that  \begin{equation}
    \label{I1bound}
    \|I_1\|_{op} \lesssim \|\bm{\Sigma}\|_{op} \left(\sqrt{\frac{\hat{K}^2r(\bm{\Sigma})}{n}}+\frac{\hat{K}^2r(\bm{\Sigma})}{n}\right)
\end{equation}
holds with probability at least $1-3e^{-10r(\bm{\Sigma})}$.
    
     \subsubsection*{Step 2: Bounding $\|I_2\|_{op}$}
 
     {\it 2.1) Conditioning on $\{\bm{X_i}\}_{i=1}^n$ and using the randomness of $\{\bm{\tau}_i\}_{i=1}^n$}

     \noindent
     Since $\bm{\xi}_i=\mathcal{Q}_1(\bm{\Lambda}^{-1}\bm{X}_i+\bm{\tau}_i)-\bm{\Lambda}^{-1}\bm{X}_i$, conditioning on $\{\bm{X}_i\}_{i=1}^n$, $\bm{\xi}_i$'s are independent of each other, and Lemma \ref{lem1} gives $\mathbbm{E}(\bm{\xi}_i\bm{\xi}_i^\top)=\frac{1}{4}\bm{I}_d$.   Hence, by letting $\bm{\eta}_i=\bm{\Lambda\xi}_i$,  $I_2$ can be written as 
         $$I_2=\frac{1}{n}\sum_{i=1}^n(\bm{\Lambda\xi}_i)(\bm{\Lambda\xi}_i)^\top-\mathbbm{E}[(\bm{\Lambda\xi}_i)(\bm{\Lambda\xi}_i)^\top]=\frac{1}{n}\sum_{i=1}^n\bm{\eta}_i\bm{\eta}_i^\top-\mathbbm{E}[\bm{\eta}_i\bm{\eta}_i^\top].$$ By Lemma \ref{lem1} we have $\|\bm{\xi}_i\|_{\psi_2}=O(\lambda)$, and thus for any $\bm{v}\in \mathbb{R}^d$ it holds that $$\|\langle\bm{\eta}_i,\bm{v}\rangle\|_{\psi_2}=\|\langle\bm{\xi}_i,\bm{\Lambda v}\rangle\|_{\psi_2}=O(\|\bm{\Lambda v}\|_{2}).$$ Combining with $\|\langle\bm{\eta}_i,\bm{v}\rangle\|_{L^2}^2=\|\langle \bm{\xi}_i,\bm{\Lambda v}\rangle\|_{L^2}^2=\frac{1}{4}\|\bm{\Lambda v}\|_{2}^2$, we know that $\bm{\eta}_i$'s are $O(1)$-sub-Gaussian under Definition \ref{def1}. Thus, Theorem \ref{lem3} implies that with probability at least $1-3e^{-u_2}$, it holds that \begin{equation}\nonumber
         \|I_2\|_{op}\lesssim \sqrt{\frac{\tr(\bm{\Lambda}^2)\|\bm{\Lambda}\|_{op}^2+\|\bm{\Lambda}\|_{op}^4u_2}{n}}+ \frac{\tr(\bm{\Lambda}^2)+\|\bm{\Lambda}^2\|_{op}u_2}{n}.
     \end{equation}
     Setting $u_2=10r(\bm{\Sigma})$, we obtain that \begin{equation}\label{3.16}
          \|I_2\|_{op}\lesssim \sqrt{\frac{\tr(\bm{\Lambda}^2)\|\bm{\Lambda}\|_{op}^2+\|\bm{\Lambda}\|_{op}^4r(\bm{\Sigma})}{n}}+ \frac{\tr(\bm{\Lambda}^2)+\|\bm{\Lambda}^2\|_{op}r(\bm{\Sigma})}{n}
     \end{equation} 
     holds with probability at least $1-3e^{-10r(\bm{\Sigma})}$.

\noindent{\it 2.2) Dealing with the randomness of $\{\bm{X}_i\}_{i=1}^n$}

\noindent    
Recall that $\bm{\Lambda}=\diag(\lambda_{1,\min},...,\lambda_{d,\min})$ where $\lambda_{j,\min}=\max_{1\leq i\leq n}|X_{ij}|$. Since $\|X_{ij}\|_{\psi_2}\leq K\sqrt{\mathbbm{E}|X_{ij}|^2}=K\sqrt{\Sigma_{jj}}$, for any $u_3 \geq 0$ we have $$\mathbbm{P}(|X_{ij}|\geq u_3)\leq 2\exp\left(-\frac{C_4u_3^2}{K^2\Sigma_{jj}}\right),$$ then a union bound over $i\in [n]$ gives 
\begin{equation}\nonumber
    \mathbbm{P}\big(\lambda_{j,\min}\geq u_3\big)\leq 2n\exp\left(-\frac{C_4u_3^2}{K^2\Sigma_{jj}}\right).
\end{equation}
We set $u_3=C_5K\sqrt{\Sigma_{jj}\log(nd)}$ with sufficiently large $C_5$ to obtain that, with probability at least $1-2(nd)^{-11}$ we have $\lambda_{j,\min}\leq C_6K\sqrt{\Sigma_{jj}\log (nd)}$. Further, a union bound over $j\in [d]$ yields that \begin{equation}\label{3.18}
    \mathbbm{P}\Big(\lambda_{j,\min}\leq C_6K\sqrt{\Sigma_{jj}\log (nd)},~\forall j\in [d]\Big) \geq 1-2(nd)^{-10}. 
\end{equation}  
On this high-probability event, it immediately follows that \begin{gather}\label{3.19}
    \|\bm{\Lambda}\|_{op}=\max_{1\leq j\leq d}\lambda_{j,\min}\leq C_6K \max_{1\leq j\leq d}\sqrt{\Sigma_{jj}\log (nd)}=C_6K\sqrt{\|\bm{\Sigma}\|_\infty \log (nd)},\\\label{3.20}
    \tr(\bm{\Lambda}^2)=\sum_{j=1}^d\lambda_{j,\min}^2\leq C_6^2K^2 \sum_{j=1}^d \Sigma_{jj}\log (nd)=C_6^2K^2\tr(\bm{\Sigma})\log (nd).
\end{gather}
Substituting (\ref{3.19}), (\ref{3.20}) into (\ref{3.16}) and using $r(\bm{\Sigma})\|\bm{\Sigma}\|_\infty \le \tr(\bm{\Sigma})$ yield that, with the promised probability, \begin{equation}\label{I2bound}
    \|I_2\|_{op}\leq C_7K^2 \log(nd) \left(\sqrt{\frac{\tr(\bm{\Sigma})\|\bm{\Sigma}\|_\infty}{n}}+\frac{\tr(\bm{\Sigma})}{n}\right).
\end{equation}

      \subsubsection*{Step 3: Bounding $\|I_3\|_{op}$}

It suffices to bound $\big\|\frac{1}{n}\sum_{i=1}^n\bm{X}_i\bm{\xi}_i^\top\bm{\Lambda}\big\|_{op}$
since $\|I_3\|_{op}\leq 2\big\|\frac{1}{n}\sum_{i=1}^n \bm{X}_i\bm{\xi}_i^\top\bm{\Lambda}\big\|_{op}$. We utilize matrix Bernstein's inequality  for this purpose.

\noindent{\it 3.1) Conditioning on $\{\bm{X}_i\}_{i=1}^n$ and using the randomness of $\{\bm{\tau}_i\}_{i=1}^n$}

\noindent
Given $\{\bm{X_i}:i\in [n]\}$, $\bm{\xi}_i$'s are independent, and   $\mathbbm{E}(\bm{\xi}_i)=0$ implies $\mathbbm{E}(\bm{X}_i\bm{\xi}_i^\top\bm{\Lambda})=0$.  Our goal is to bound the operator norm of the following  {\color{black}sum of independent zero-mean matrices:}\begin{equation}
    \Big\|\sum_{i=1}^n \bm{W}_i\Big\|_{op},\quad\text{where }\bm{W}_i:=\frac{1}{n}\bm{X}_i\bm{\xi}_i^\top\bm{\Lambda}.
\end{equation}
Since $\|\bm{\xi}_i\|_\infty \leq \frac{3}{2}$, we have\begin{equation}
    \label{Lamxi}
    \|\bm{\Lambda\xi}_i\|_2^2= \sum_{j=1}^d \lambda_{j,\min}^2 \xi_{ij}^2 \leq \frac{9}{4}\sum_{j=1}^d \lambda_{j,\min}^2  = \frac{9}{4}\tr(\bm{\Lambda}^2),
\end{equation} which yields \begin{equation}\label{3.23}
    \|\bm{W}_i\|_{op} \leq \frac{1}{n}\|\bm{X}_i\|_2 \|\bm{\Lambda\xi}_i\|_2 \leq \frac{3\sqrt{\tr(\bm{\Lambda}^2)}}{2n}\max_{1\leq i\leq n}\|\bm{X}_i\|_2:=L.
\end{equation} 
Moreover, we estimate the matrix variance statistic. First, using (\ref{Lamxi}) in $(i)$ below we have  \begin{equation}\nonumber
    \begin{aligned}
        \sum_{i=1}^n \mathbbm{E}(\bm{W}_i\bm{W}_i^\top) =  \sum_{i=1}^n \frac{\mathbbm{E}\big(\|\bm{\Lambda\xi}_i\|_2^2\big)}{n^2}\bm{X}_i\bm{X}_i^\top \stackrel{(i)}{\preceq} \frac{9\tr(\bm{\Lambda}^2)}{4n^2}\sum_{i=1}^n\bm{X}_i\bm{X}_i^\top ,
    \end{aligned}
\end{equation}
which gives \begin{equation}
    \label{MB1}
    \Big\|\sum_{i=1}^n \mathbbm{E}(\bm{W}_i\bm{W}_i^\top)\Big\|_{op}\leq \frac{9\tr(\bm{\Lambda}^2)}{4n^2}\Big\|\sum_{i=1}^n\bm{X}_i\bm{X}_i^\top\Big\|_{op}.
\end{equation} Second, using $\mathbbm{E}(\bm{\xi}_i\bm{\xi}_i^\top)=\frac{1}{4}\bm{I}_d$ in $(i)$ below we obtain \begin{equation}\nonumber
    \sum_{i=1}^n \mathbbm{E}(\bm{W}_i^\top\bm{W}_i)=\sum_{i=1}^n \frac{\|\bm{X}_i\|_2^2}{n^2}\mathbbm{E}\big(\bm{\Lambda\xi}_i\bm{\xi}_i^\top\bm{\Lambda}\big)\stackrel{(i)}{=}\frac{\sum_{i=1}^n\|\bm{X}_i\|_2^2}{4n^2}\bm{\Lambda}^2,
\end{equation}
which implies \begin{equation}\label{MB2}
    \Big\|\sum_{i=1}^n\mathbbm{E}(\bm{W}_i^\top\bm{W}_i)\Big\|_{op}=\frac{\sum_{i=1}^n\|\bm{X}_i\|_2^2}{4n^2}\|\bm{\Lambda}\|^2_{op}.
\end{equation} Thus,  recall (\ref{3.23}), (\ref{MB1}) and (\ref{MB2}), we can define\begin{equation}\label{3.26}
    \sigma^2:= \frac{9}{4n^2}\left(\tr(\bm{\Lambda}^2)\cdot\Big\|\sum_{i=1}^n\bm{X}_i\bm{X}_i^\top\Big\|_{op}+ \|\bm{\Lambda}\|_{op}^2\cdot \sum_{i=1}^n\|\bm{X}_i\|_2^2\right)
\end{equation}
and then invoke matrix Bernstein's inequality (Lemma \ref{lem2}) to obtain \begin{equation}\nonumber
    \mathbbm{P}\left(\Big\|\sum_{i=1}^n\bm{W}_i\Big\|_{op}\geq u_4\right)\leq  2d \exp\left(-C_8 \min \Big\{\frac{u_4^2}{\sigma^2},\frac{u_4}{L}\Big\}\right)
\end{equation}
for any $u_4\geq 0$. Setting $u_4=C_9[\sigma\sqrt{\log (nd)}+L\log (nd)]$ with large enough $C_9$  yields  
\begin{equation}
    \label{sigmaL}\mathbbm{P}\left(\Big\|\sum_{i=1}^n\bm{W}_i\Big\|_{op}\leq C_9\big[\sigma\sqrt{\log (nd)}+L\log (nd)\big]\right)\geq 1-2(nd)^{-10},
\end{equation}
where $L$ and $\sigma$ are defined in (\ref{3.23}) and (\ref{3.26}), respectively.

\noindent{\it 3.2) Dealing with the randomness of $\{\bm{X}_i\}_{i=1}^n$}

\noindent
Note that $\sigma$ and $L$   depend on $\{\bm{X}_i\}_{i=1}^n$, so we still need to bound them using the randomness of $\{\bm{X}_i\}$. Recall that, on the high-probability event in (\ref{3.18}), $\|\bm{\Lambda}\|_{op}$ and $\tr(\bm{\Lambda}^2)$ can be bounded as in (\ref{3.19})-(\ref{3.20}). Combining with $$ \max_{1\leq i\leq n}\|\bm{X}_i\|_2= \max_{1\leq i\leq n}\sqrt{\sum_{j=1}^n X_{ij}^2}\leq \sqrt{\sum_{j=1}^n \max_{1\leq i\leq n}X_{ij}^2}=\sqrt{\tr(\bm{\Lambda}^2)},$$
we substitute (\ref{3.20}) to obtain \begin{equation}\label{Lbound}
    L\leq \frac{3\tr(\bm{\Lambda}^2)}{2n}\lesssim \frac{K^2\tr(\bm{\Sigma})\log(nd)}{n}.
\end{equation} For bounding $\sigma^2$ in  (\ref{3.26}) we still need to control $\|\sum_{i=1}^n\bm{X}_i\bm{X}_i^\top\|_{op}$ and $\sum_{i=1}^n\|\bm{X}_i\|_2^2$.  Recalling from (\ref{I1bound}) that we have $\big\|\frac{1}{n}\sum_{i=1}^n\bm{X}_i\bm{X}_i^\top - \bm{\Sigma}\big\|_{op}\lesssim  \|\bm{\Sigma}\|_{op}\big(\sqrt{\frac{\hat{K}^2r(\bm{\Sigma})}{n}}+\frac{\hat{K}^2r(\bm{\Sigma})}{n}\big)$ with the promised probability, and note that this implies \begin{equation}
\label{3.31}    \Big\|\sum_{i=1}^n\bm{X}_i\bm{X}_i^\top\Big\|_{op}\leq \Big\|\sum_{i=1}^n\bm{X}_i\bm{X}_i^\top-n\bm{\Sigma}\Big\|_{op}+n\|\bm{\Sigma}\|_{op}\lesssim \big(n+\hat{K}^2r(\bm{\Sigma})\big) \|\bm{\Sigma}\|_{op}.
\end{equation}
To bound $\sum_{i=1}^n\|\bm{X}_i\|_2^2$,    we   write it as \begin{equation}
    \begin{aligned}\label{3.30}
        \sum_{i=1}^n \|\bm{X}_i\|_2^2 = \sum_{i=1}^n \sum_{j=1}^d X_{ij}^2\stackrel{(i)}{=}\sum_{j=1}^d \Sigma_{jj}\left(\sum_{i=1}^n \Big(\frac{X_{ij}}{\|X_{ij}\|_{L^2}}\Big)^2\right)\stackrel{(ii)}{=}\sum_{j=1}^d\Sigma_{jj}\|\bm{Y}_j\|_2^2,
    \end{aligned}
\end{equation}
where we interchange the summation order and use $\Sigma_{jj}=\mathbbm{E}(X_{ij}^2)=\|X_{ij}\|^2_{L^2}$ in $(i)$, and in $(ii)$ we let $$\bm{Y}_j=\|X_{ij}\|_{L^2}^{-1}\big(X_{1j},...,X_{nj}\big)^\top.$$ Since the coordinates are from independent samples, $\{\bm{Y}_j\}_{j=1}^d$ are random vectors with independent sub-Gaussian coordinates  and they satisfy $\big\|\frac{X_{ij}}{\|X_{ij}\|_{L^2}}\big\|_{\psi_2}\leq K$. Hence, we can invoke Lemma \ref{lem4} and a union bound over $1\leq j\leq d$ to obtain that, for any $u_5\geq 0$,
\begin{equation}\nonumber
    \mathbbm{P}\left(\max_{1\leq j\leq d}\Big|\|\bm{Y}_j\|_2-\sqrt{n}\Big|\geq u_5\right)\leq 2d\exp \left(-\frac{C_{10}u_5^2}{\hat{K}^2}\right).
\end{equation}
We let $u_5=C_{11}\hat{K}\sqrt{\log (nd)}$ with sufficiently large $C_{11}$, then   we obtain
$$\mathbbm{P}\Big(\max_{1\leq j\leq d}\big|\|\bm{Y}_j\|_2-\sqrt{n}\big|\lesssim \hat{K}\sqrt{\log(n d)}\Big)\ge 1-2(nd)^{-10},$$
  which implies $\max_{1\leq j\leq d}\|\bm{Y}_j\|_2\lesssim\sqrt{n}$ 
under the scaling of $n\gtrsim \hat{K}^2\log (nd)$. Substituting this into (\ref{3.30}), we arrive at \begin{equation}\label{3.34}
    \sum_{i=1}^n \|\bm{X}_i\|_2^2 \leq \sum_{j=1}^d\Sigma_{jj}C_{12}n=C_{12}n\cdot \tr(\bm{\Sigma}). 
\end{equation}
Then, we combine (\ref{3.19}), (\ref{3.20}), (\ref{3.31}), (\ref{3.34}) to bound $\sigma^2$ in  (\ref{3.26}) as \begin{equation}\label{sigmabound}
    \sigma^2 \lesssim \frac{K^2\|\bm{\Sigma}\|_{op}\tr(\bm{\Sigma})\log (nd)}{n}+\frac{K^2\hat{K}^2 [\tr(\bm{\Sigma})]^2\log(nd)}{n^2}.
\end{equation}
Further, substituting (\ref{Lbound}), (\ref{sigmabound}) into (\ref{sigmaL}),  along with $\|I_3\|_{op}\le 2\|\sum_{i=1}^n\bm{W}_i\|_{op}$, we obtain   \begin{equation}\label{I3bound}
    \|I_3\|_{op}\lesssim     \sqrt{\frac{K^2\|\bm{\Sigma}\|_{op}\tr(\bm{\Sigma})\log^2(nd)}{n}} +  \frac{K^2 \tr(\bm{\Sigma})\log^2(nd)}{n} 
\end{equation} holds 
with probability at least $1-2(nd)^{-10}$.

\subsubsection*{Step 4: Combining Everything}

Substituting the bounds in  (\ref{I1bound}), (\ref{I2bound}), (\ref{I3bound}) into (\ref{finalback}), we obtain that  \begin{equation}\nonumber
    \|\bm{\widetilde{\Sigma}}-\bm{\Sigma}\|_{op} \lesssim K^2 \left(\sqrt{\frac{\tr(\bm{\Sigma})\|\bm{\Sigma}\|_{op} \log^2(nd)}{n}}+ \frac{\tr(\bm{\Sigma})\log^2(nd)}{n}\right)
\end{equation}
holds with the promised probability. Recalling $\tr(\bm{\Sigma})=r(\bm{\Sigma})\|\bm{\Sigma}\|_{op},$ the result follows.  \end{proof}

\section{Concluding Remarks}\label{sec6}

For sub-Gaussian distributions, we propose a 2-bit covariance estimator $\bm{\widetilde{\Sigma}}$  (\ref{pfesti}) that possesses near-optimal dimension-free operator norm error rate up to logarithmic factors; c.f.,  Theorem \ref{maintheorem} and Theorem \ref{lem3}. This  improves on $\widehat{\bm{\Sigma}}_{na}$ and  $\widehat{\bm{\Sigma}}_{a}$ proposed  in \cite{dirksen2022covariance,dirksen2023tuning} that are sub-optimal for $\bm{\Sigma}$ in the case of $\tr(\bm{\Sigma})\ll d\|\bm{\Sigma}\|_\infty$. More practically, our second  advantage  is that our estimator is free of {\it any} tuning parameter, unlike $\bm{\widehat{\Sigma}}_{na}$ (\ref{aosesti}) in \cite{dirksen2022covariance} that requires tuning according to certain unknown parameters.

As significant departures from \cite{dirksen2022covariance,dirksen2023tuning} who used uniformly dithered 1-bit quantizer, our quantization procedure is motivated by \cite{chen2022quantizing} and built upon  a 2-bit quantizer associated with triangular dither. This immediately leads to a new 2-bit estimator $\bm{\widetilde{\Sigma}}_{\tb}$ (\ref{2.16}) comparable to $\bm{\widehat{\Sigma}}_{na}$ in several respects, and further, our key estimator $\bm{\widetilde{\Sigma}}$ is obtained from  $\bm{\widetilde{\Sigma}}_{\tb}$  by using dithering scales $(\lambda_{j,\min}=\max_{1\le i\le n}|X_{ij}|)_{j=1}^d$ which vary across entries and are determined by  the data.

Our simulations under Gaussian samples demonstrate  the following: (i) $\bm{\widetilde{\Sigma}}_{\tb}$ outperforms $\bm{\widehat{\Sigma}}_{na}$ noticeable; (ii) our parameter-free $\bm{\widetilde{\Sigma}}$   outperforms $\bm{\widehat{\Sigma}}_{na},\bm{\widetilde{\Sigma}}_{\tb}$ with {\it optimally} tuned parameter in the case of $\tr(\bm{\Sigma})\ll d\|\bm{\Sigma}\|_\infty$; (iii) the performance of  $\bm{\widetilde{\Sigma}}$ can be further improved by properly shrinking the dithering scales in (\ref{3.1}). For (iii), it is remarkable that $\bm{\widetilde{\Sigma}}(0.5)$ using $(\frac{1}{2}\lambda_{j,\min})_{j=1}^d$ as dithering scales typically achieves operator norm errors less than twice of the errors of sample covariance. In the theoretical analysis, we deal with  deal with the dependence between the dithering scales and samples by
first conditioning on the samples and 
utilizing the randomness of the dither, and then using matrix Bernstein's inequality to establish dimension-free bound.

Let us close this paper with some discussions on several   aspects.   
\paragraph{{\color{black}Online Setting:}} We continue from Remark \ref{online} and adapt our estimator to the online setting where the samples arise sequentially and one cannot obtain $\bm{\Lambda}_{\min}$. We first observe that the proof of Theorem \ref{maintheorem} does not essentially relies on the specific choice of dithering scales $(\lambda_{j,\min})_{j=1}^d$. In fact, all the arguments and thus the final guarantee are valid for any dithering scales $(\lambda_j)_{j=1}^d$ possibly depending on the samples $(\bm{X}_i)_{i=1}^d$  and   satisfying\footnote{Specifically, $\lambda_j\ge\lambda_{j,\min}$ is needed in equating the 2-bit quantizer $\mathcal{Q}_{\lambda_j,\tb}$ and $\mathcal{Q}_{\lambda_j}$ as per (\ref{3.2}), and $\lambda_j\lesssim K\sqrt{\Sigma_{jj}\log(nd)}$ is required in controlling $\|\bm{\Lambda}\|_{op}$ and $\tr(\bm{\Lambda}^2)$ as per (\ref{3.19})--(\ref{3.20}).} 
\begin{align}
    \lambda_{j,\min} \le \lambda_j \lesssim K \sqrt{\Sigma_{jj}\log(nd)},\quad j=1,2,...,d.\label{desir}
\end{align}
 In light of this, our remedy for the online setting is to first obtain such $\tilde{\lambda}_j$'s from  a {\it small} number of samples, and then use them for subsequent quantization and estimation. Specifically, we propose to first collect 
 $\bm{X}_1,\bm{X}_2,...,\bm{X}_{n_0}$ with $n_0=O_K(\log(nd))$ and define 
 \begin{align}\label{ondither}
     \tilde{\lambda}_j= C(K)\left[\Big(\frac{1}{n_0}\sum_{i=1}^{n_0}X_{ij}^2 \Big)\log(nd)\right]^{1/2},\quad j=1,2,...,d
 \end{align}
 for some large enough tuning parameter $C(K)$ depending on $K$ only. Note that these $\tilde{\lambda}_j$ satisfy (\ref{desir}) due to Bernstein's inequality; see Appendix C for details. We then use 
$\widetilde{\bm{\Lambda}}=\diag(\tilde{\lambda}_1,...,\tilde{\lambda}_d)$ and $\bm{\tau}_i\sim \mathscr{U}[-\frac{1}{2},\frac{1}{2}]^d+\mathscr{U}[-\frac{1}{2},\frac{1}{2}]^d$
 for quantizing subsequent samples $\bm{X}_{n_0+1},...,\bm{X}_n$ to 
 \begin{align*}
     \dot{\bm{X}}_i  = \bm{\mathcal{Q}}_{\widetilde{\bm{\Lambda}},\tb}(\bm{X}_i+\widetilde{\bm{\Lambda}}\bm{\tau}_i) = \Big(\mathcal{Q}_{\tilde{\lambda}_1,\tb}(X_{i1}+\tilde{\lambda}_1\tau_{i1}),...,\mathcal{Q}_{\tilde{\lambda}_d,\tb}(X_{id}+\tilde{\lambda}_d\tau_{id})\Big)^\top,\quad i =n_0+1,...,n.
 \end{align*}
 From $(\bm{X}_i)_{i=1}^{n_0}$ and $(\dot{\bm{X}}_i)_{i=n_0+1}^n$, we propose the estimator 
 \begin{align}\label{onlineest}
     \widetilde{\bm{\Sigma}}_{on} = \frac{1}{n}\Big[\sum_{i=1}^{n_0}\bm{X}_i\bm{X}_i^\top+\sum_{i=n_0+1}^n \dot{\bm{X}}_i\dot{\bm{X}}_i^\top\Big] - \frac{n-n_0}{4n}\widetilde{\bm{\Lambda}}^2
 \end{align}
 and can show     $\widetilde{\bm{\Sigma}}_{on}$ satisfies the same dimension-free error rate as $\widetilde{\bm{\Sigma}}$; see Appendix C for details.

Note that there can certainly be other adaptations built on this idea. For instance, in a setting where it is prohibitive to obtain unquantized samples, one can first use quantization methods with a common large enough dithering scale  (i.e., the methods for  $\widehat{\bm{\Sigma}}_{na},~\widehat{\bm{\Sigma}}_{\tb}$) to find suitable $\tilde{\lambda}_j$'s, and then switch to our quantization method with dithering scales $\tilde{\lambda}_j$'s.

By testing $d=10$ and $n=400:200:1600$, we report simulation results to show that the ``online estimator'' $\widetilde{\bm{\Sigma}}_{on}$ behaves comparably to $\{\widetilde{\bm{\Sigma}}(s):s=0.5,0.7,0.9\}$ over $\bm{\Sigma}=\bm{\Sigma}(1,0.2,25)$; see Figure \ref{fig3} and the associated texts for details.

\begin{figure}[ht!]
    \centering
    \includegraphics[scale = 0.7]{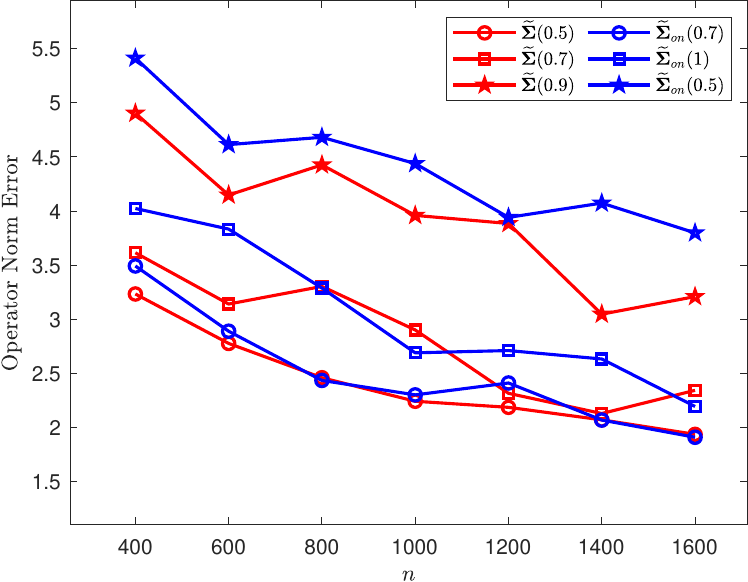}

    \caption{\footnotesize Comparing
    the ``online'' estimator $\widetilde{\bm{\Sigma}}_{on}$ and       estimators $\{\widetilde{\bm{\Sigma}}(s):s=0.5,0.7,0.9\}$ under Gaussian samples with covariance matrix $\bm{\Sigma}=\bm{\Sigma}(1,0.2,25)$. 
    Let us use $n_0 =\lceil C_1(K)\log (n)\rceil$ and $\tilde{\lambda}_j =C(K)\big[(\frac{1}{n_0}\sum_{i=1}^{n_0}X_{ij}^2)\log (n)\big]^{1/2}$ for $\widetilde{\bm{\Sigma}}_{on}$. We  simply fix $C_1(K)=3$ and notice that we only collect very few samples for determining $\tilde{\lambda}_j$, e.g., we use $n_0 = \lceil 3\log(1600)\rceil=23$ when $n=1600$. Regarding $C(K)$, we find that $C(K)=0.7$ is empirically near-optimal under $n=500$. In the simulation, we test such near-optimal choice $C(K)=0.7$ and also the sub-optimal choices $C(K)=0.5$ and $C(K)=1$, with the corresponding estimators denoted by $\widetilde{\bm{\Sigma}}_{on}(0.7)$, $\widetilde{\bm{\Sigma}}_{on}(0.5)$ and $\widetilde{\bm{\Sigma}}_{on}(1)$, respectively. We find that $\widetilde{\bm{\Sigma}}_{on}$ with near-optimal $C(K)$ performs comparably to $\widetilde{\bm{\Sigma}}(0.5)$.} 
    \label{fig3}
\end{figure}
 %by using $O_K(\log(nd))$ full samples.  

\paragraph{{\color{black} Probability Terms:}} In this paper, we present the   probability terms for the theoretical guarantees of the 2-bit estimators consistently as $1-O((nd)^{-10})$ (note that $3\exp(-\frac{d}{2})$ in Theorem \ref{thm4} and $10\exp(-10r(\bm{\Sigma}))$ in Theorem \ref{maintheorem} are typically much smaller than $(nd)^{-10}$). Such probability is weaker than the one for existing covariance estimation guarantees without quantization; e.g., see Theorem \ref{lem3} which immediately implies the bound $O\big(\sqrt{\frac{\tr(\bm{\Sigma})\|\bm{\Sigma}\|_{op}}{n}}+\frac{\tr(\bm{\Sigma})}{n}\big)$ with probability exceeding $1-3\exp(-r(\bm{\Sigma}))$; see also \cite{koltchinskii2017concentration,abdalla2024covariance}, among others. While we pay not attention to refining the probability in this work, we also point out that   finer probability terms are possible under more careful analysis. As an example, in Theorem \ref{thm4} we can bound $\|\widetilde{\bm{\Sigma}}_{\tb}-\bm{\Sigma}\|_{op}$ by bounding $\|\widetilde{\bm{\Sigma}}_{\tb}-\mathbbm{E}\widetilde{\bm{\Sigma}}_{\tb}\|_{op}$ and $\|\mathbbm{E}\widetilde{\bm{\Sigma}}_{\tb}-\bm{\Sigma}\|_{op}$ separately, which avoids the probability term of $O((nd)^{-10})$ and will instead lead to a probability tail similar to Theorem \ref{lem3}. It is interesting to pursue similar improvement for Theorem \ref{maintheorem}. 

\paragraph{Two Quantizers:} While the numerical improvement of our $\bm{\widetilde{\Sigma}}_{\tb}$ over $\bm{\widehat{\Sigma}}_{na}$  is notable over Gaussian samples (Figure \ref{fig2}), this is in general not     theoretically provable   since it may not be true for some other distribution.  As an extreme example,\footnote{It is extreme in the sense that the samples are already binary and thus there is no quantization need.} suppose that entries of $\bm{X}_i$ are $\{\pm 1\}$-valued, then $\bm{\widehat{\Sigma}}_{na}$ in (\ref{aosesti}) with     $\lambda=1$ simply reduces to sample covariance, meaning that there is no information loss in the quantization. Thus, our $\bm{\widetilde{\Sigma}}_{\tb}$ is expected to behave (at least slightly) worse than $\bm{\widehat{\Sigma}}_{na}$. Therefore, $\bm{\widehat{\Sigma}}_{na}$ and $\bm{\widetilde{\Sigma}}_{\tb}$ correspond to two {\it parallel} dithered quantization methods that allow for covariance estimation, with the former applying a uniformly dithered sign quantizer twice, while the latter being the 2-bit quantizer in (\ref{2.15}) associated with a triangular dither;  in general, it may be a rushed judgment to declare one superior to the other.

Built upon $\bm{\widetilde{\Sigma}}_{\tb}$,   our major contribution is to devise $\bm{\widetilde{\Sigma}}$   that enjoys near optimal operator norm rate and requires no tuning, which is done by
using dithering scales   in (\ref{3.1}). It is interesting to note that this development straightforwardly carries over to $\bm{\widehat{\Sigma}}_{na}$. In particular, let $\bm{\tau}_{i1},\bm{\tau}_{i2}\sim \mathscr{U}[-1,1]^d$ and $\bm{\Lambda}_{\min}=\diag(\lambda_{1,\min},...,\lambda_{d,\min})$ where $\lambda_{j,\min}$ is given in (\ref{3.1}), we consider the following {\it parameter-free} estimator \begin{equation}\label{aosestiada}
\begin{aligned}
    \bm{\widehat{\Sigma}}^\star= \frac{1}{2n}\sum_{i=1}^n (\bm{\dot{X}}_{i1}\bm{\dot{X}}_{i2}^\top+\bm{\dot{X}}_{i2}\bm{\dot{X}}_{i1}^\top),\quad\text{with }\begin{cases}\nonumber
        \bm{\dot{X}}_{i1}=\bm{\Lambda}_{\min}\cdot \sign(\bm{X}_i+\bm{\Lambda}_{\min}\bm{\tau}_{i1})\\
        \bm{\dot{X}}_{i2}=\bm{\Lambda}_{\min}\cdot \sign(\bm{X}_i+\bm{\Lambda}_{\min}\bm{\tau}_{i2})
    \end{cases}.
    \end{aligned}
\end{equation}
Then, using techniques provided in the proof of Theorem \ref{maintheorem}, one can show that $\bm{\widehat{\Sigma}}^\star$ possesses a {\it near-optimal} operator norm error rate that is comparable to Theorem \ref{maintheorem} for $\bm{\widetilde{\Sigma}}$ and only logarithmically worse than Theorem \ref{lem3} for sample covariance. Additionally, we expect that   properly shrinking the dithering scales like (\ref{shrinkageesti}) also improves the numerical performance of $\bm{\widehat{\Sigma}}^\star$ over Gaussian samples.

%\paragraph{{\color{black}Robustness:}}  

%{\centering \huge \bf Appendix \par}

\subsection*{Appendix A: The Proof of Theorem \ref{lem3}}\label{sec5}
%{\color{red}[Next time, check proof from here.]}
%\subsection{The Proof of Theorem \ref{lem3}}
The proof is adjusted from \cite[Thm. 9.2.4]{vershynin2018high}. The key technical tool is the matrix deviation inequality with optimal dependence on $K$ that can be found in \cite{jeong2022sub}.
\begin{lem}{\rm (Corollary 1.2 in \cite{jeong2022sub})}\label{matrixd}
    Consider $\bm{A}=[\bm{a}_1,...,\bm{a}_n]^\top\in \mathbb{R}^{n\times d}$ with independent isotropic rows $\bm{a}_i$'s satisfying $\max_{1\leq i\leq n}\|\bm{a}_i\|_{\psi_2}\leq K$ and a bounded subset $\mathcal{T}\subset \mathbb{R}^d$ with radius $\rad(\mathcal{T}):=\sup_{\bm{x}\in \mathcal{T}}\|\bm{x}\|_2$, Gaussian width $\omega(\mathcal{T}):=\mathbbm{E}\sup_{\bm{x}\in\mathcal{T}}\langle\bm{g},\bm{x}\rangle$ for $\bm{g}\sim \mathcal{N}(0,\bm{I}_d)$. Let $\hat{K}=K\sqrt{\log K}$, then for any $t\geq 0$, with probability at least $1-3e^{-t^2}$ we have 
    \begin{equation}\nonumber
        \sup_{\bm{x}\in \mathcal{T}}\big|\|\bm{Ax}\|_2-\sqrt{n}\|\bm{x}\|_2\big| \leq C\hat{K}\big(\omega(\mathcal{T})+t\cdot\rad(\mathcal{T})\big).
    \end{equation}
\end{lem}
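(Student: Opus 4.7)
My plan is to mimic the proof of Theorem~9.2.4 in Vershynin's high-dimensional probability book, but with Lemma~\ref{matrixd} as the decisive tool so as to both obtain the sharpened $\hat{K}$ dependence and accommodate samples that are only independent (not i.i.d.). First I would assume without loss of generality that $\bm{\Sigma}$ is invertible, otherwise restricting to $\mathrm{range}(\bm{\Sigma})$, in which the $\bm{X}_i$'s lie almost surely. I then define the whitened samples $\bm{Y}_i = \bm{\Sigma}^{-1/2}\bm{X}_i$. These are independent, isotropic, and for any unit $\bm{u}\in \mathbb{R}^d$, Definition~\ref{def1} gives $\|\langle\bm{Y}_i,\bm{u}\rangle\|_{\psi_2} = \|\langle\bm{X}_i,\bm{\Sigma}^{-1/2}\bm{u}\rangle\|_{\psi_2} \leq K\|\langle\bm{X}_i,\bm{\Sigma}^{-1/2}\bm{u}\rangle\|_{L^2} = K$, hence $\|\bm{Y}_i\|_{\psi_2}\leq K$.

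Next, writing $\bm{B}\in\mathbb{R}^{n\times d}$ for the matrix with rows $\bm{Y}_i^\top$ and letting $\mathcal{T}=\bm{\Sigma}^{1/2}\mathbb{S}^{d-1}$, the change of variables $\bm{x}=\bm{\Sigma}^{1/2}\bm{v}$ turns $\bm{v}^\top\bm{\Sigma}\bm{v}$ into $\|\bm{x}\|_2^2$ and $\frac{1}{n}\sum_i\langle\bm{X}_i,\bm{v}\rangle^2$ into $\frac{1}{n}\|\bm{B}\bm{x}\|_2^2$, so that
\[
\Big\|\tfrac{1}{n}\textstyle\sum_{i=1}^n\bm{X}_i\bm{X}_i^\top-\bm{\Sigma}\Big\|_{op} = \sup_{\bm{x}\in\mathcal{T}}\Big|\tfrac{1}{n}\|\bm{B}\bm{x}\|_2^2-\|\bm{x}\|_2^2\Big|.
\]
A direct calculation gives $\rad(\mathcal{T})=\sqrt{\|\bm{\Sigma}\|_{op}}$ and $\omega(\mathcal{T})=\mathbbm{E}\|\bm{\Sigma}^{1/2}\bm{g}\|_2\leq\sqrt{\tr(\bm{\Sigma})}$. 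I would then invoke Lemma~\ref{matrixd} with $t=\sqrt{u}$ to obtain, with probability at least $1-3e^{-u}$,
\[
E:=\sup_{\bm{x}\in\mathcal{T}}\big|\|\bm{B}\bm{x}\|_2-\sqrt{n}\|\bm{x}\|_2\big|\leq C\hat{K}\sqrt{\|\bm{\Sigma}\|_{op}}\big(\sqrt{r(\bm{\Sigma})}+\sqrt{u}\big).
\]

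To convert this $\ell_2$-deviation bound into a squared-norm bound I would use the elementary inequality $|a^2-b^2|\leq|a-b|\big(2|b|+|a-b|\big)$ with $a=\|\bm{B}\bm{x}\|_2/\sqrt{n}$ and $b=\|\bm{x}\|_2\leq\rad(\mathcal{T})$, which yields
\[
\Big\|\tfrac{1}{n}\textstyle\sum_i\bm{X}_i\bm{X}_i^\top-\bm{\Sigma}\Big\|_{op}\leq \frac{2\rad(\mathcal{T})\,E}{\sqrt{n}}+\frac{E^2}{n}.
\]
Substituting the bound on $E$ together with $(\sqrt{r(\bm{\Sigma})}+\sqrt{u})^2\leq 2(r(\bm{\Sigma})+u)$ produces exactly the advertised rate, after harmlessly bounding the lone $\hat{K}$ in the linear term by $\hat{K}^2$ (valid up to absolute constants since we may take $\hat{K}\gtrsim 1$). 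The only place deserving attention beyond Vershynin's template is that the $\bm{X}_i$'s are here merely independent rather than i.i.d., but since Lemma~\ref{matrixd} itself requires only independent rows with a uniform $\psi_2$ bound, this generalization comes essentially for free and I do not anticipate any substantive obstacle.
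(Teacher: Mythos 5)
Your proposal does not prove the statement at hand. The statement is Lemma \ref{matrixd} itself --- the matrix deviation inequality
\begin{equation}\nonumber
\sup_{\bm{x}\in\mathcal{T}}\big|\|\bm{Ax}\|_2-\sqrt{n}\|\bm{x}\|_2\big|\leq C\hat{K}\big(\omega(\mathcal{T})+t\cdot\rad(\mathcal{T})\big)
\end{equation}
for a matrix with independent isotropic rows of $\psi_2$-norm at most $K$ --- which the paper imports verbatim from Corollary 1.2 of \cite{jeong2022sub} and does not prove. What you have written is instead a derivation of Theorem \ref{lem3} (the operator-norm bound for the sample covariance) \emph{assuming} Lemma \ref{matrixd} as a black box. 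Relative to the target statement this is circular: you invoke the very inequality you were asked to establish as your ``decisive tool'' and never argue it.

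As it happens, the argument you do give --- whitening $\bm{Y}_i=\bm{\Sigma}^{-1/2}\bm{X}_i$, taking $\mathcal{T}=\bm{\Sigma}^{1/2}\mathbb{S}^{d-1}$ with $\rad(\mathcal{T})\leq\|\bm{\Sigma}\|_{op}^{1/2}$ and $\omega(\mathcal{T})\leq\sqrt{\tr(\bm{\Sigma})}$, reducing to $\mathrm{range}(\bm{\Sigma})$ in the degenerate case, and converting the deviation of norms into a deviation of squared norms via $|a^2-b^2|\leq|a-b|(2|b|+|a-b|)$ --- coincides essentially line for line with the paper's proof of Theorem \ref{lem3} in Section \ref{sec5}. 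So it is a correct proof, just of a different (downstream) result. None of it touches Lemma \ref{matrixd}: establishing that lemma requires the generic-chaining analysis of sub-Gaussian empirical processes with the sharpened $K\sqrt{\log K}$ dependence, which is the substantive content of \cite{jeong2022sub} (the classical route, e.g.\ Talagrand's comparison as in Vershynin's matrix deviation inequality, yields a worse power of $K$). That analysis is entirely absent from your write-up, so the target statement remains unproved.
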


\begin{proof}[The proof of Theorem \ref{lem3}]
We first consider the case of $\rank(\bm{\Sigma})=d$ and let $\bm{Y}_i = \bm{\Sigma}^{-\frac{1}{2}}\bm{X}_i$. Note that $\bm{Y}_i$ is isotropic since $\mathbbm{E}(\bm{Y}_i\bm{Y}_i^\top)=\mathbbm{E}(\bm{\Sigma}^{-\frac{1}{2}}\bm{X}_i\bm{X}_i^\top\bm{\Sigma}^{-\frac{1}{2}})=\bm{I}_d$, and further, its  $\psi_2$ norm is bounded by $K$ because
\begin{equation}
\begin{aligned}\nonumber
    \|\bm{Y}_i\|_{\psi_2}&=\sup_{\bm{a}\in \mathbb{S}^{d-1}}\|\langle \bm{a},\bm{\Sigma}^{-\frac{1}{2}}\bm{X}_i\rangle\|_{\psi_2}=\sup_{\bm{a}\in \mathbb{S}^{d-1}}\|\langle \bm{\Sigma}^{-\frac{1}{2}}\bm{a},\bm{X}_i\rangle\|_{\psi_2}\\
    &\leq \sup_{\bm{a}\in \mathbb{S}^{d-1}}K\|\langle \bm{\Sigma}^{-\frac{1}{2}}\bm{a},\bm{X}_i\rangle\|_{L^2}=\sup_{\bm{a}\in \mathbb{S}^{d-1}}K\|\langle \bm{a},\bm{Y}_i\rangle\|_{L^2}=K.
    \end{aligned}
\end{equation}
Then, we let $\bm{Y}=[\bm{Y}_1,...,\bm{Y}_n]^\top\in \mathbb{R}^{n\times d}$ and can write the operator norm error as \begin{equation}
    \begin{aligned}\label{4.3}
        &\left\|\frac{1}{n}\sum_{i=1}^n\bm{X}_i\bm{X}_i^\top-\bm{\Sigma}\right\|_{op}  = \sup_{\bm{a}\in \mathbb{S}^{d-1}}\left| \bm{a}^\top\left(\frac{1}{n}\sum_{i=1}^n\bm{X}_i\bm{X}_i^\top-\bm{\Sigma}\right)\bm{a}\right|\\
        &=\sup_{\bm{a}\in \mathbb{S}^{d-1}}\frac{1}{n}\left|\sum_{i=1}^n\big|\big<\bm{Y}_i,\bm{\Sigma}^{\frac{1}{2}}\bm{a}\big>\big|^2-n\big\|\bm{\Sigma}^\frac{1}{2}\bm{a}\big\|_2^2\right| \stackrel{(i)}{=}\sup_{\bm{b}\in\bm{\Sigma}^{\frac{1}{2}}\mathbb{S}^{d-1}}\frac{1}{n}\Big|\|\bm{Yb}\|_2^2-n\|\bm{b}\|^2_2\Big|\\
        &\leq \sup_{\bm{b}\in \bm{\Sigma}^{\frac{1}{2}}\mathbb{S}^{d-1}}\frac{1}{n}\Big|\|\bm{Yb}\|_2-\sqrt{n}\|\bm{b}\|_2\Big|\Big|\|\bm{Yb}\|_2+\sqrt{n}\|\bm{b}\|_2\Big|,
    \end{aligned}
\end{equation}
where we use a change of variable $\bm{b}=\bm{\Sigma}^{\frac{1}{2}}\bm{a}$ in $(i)$.
Now we invoke Lemma \ref{matrixd} to obtain that, for any $t\geq 0$, with probability at least $1-3e^{-t^2}$ 
\begin{equation}
    \sup_{\bm{b}\in \bm{\Sigma}^{\frac{1}{2}}\mathbb{S}^{d-1}}\Big|\|\bm{Yb}\|_2-\sqrt{n}\|\bm{b}\|_2\Big| \leq C\hat{K}\Big[\omega(\bm{\Sigma}^{\frac{1}{2}}\mathbb{S}^{d-1})+t\cdot\rad(\bm{\Sigma}^{\frac{1}{2}}\mathbb{S}^{d-1})\Big].\label{4.4}
\end{equation}
Note that $\rad(\bm{\Sigma}^{\frac{1}{2}}\mathbb{S}^{d-1})\leq \|\bm{\Sigma}\|_{op}^{\frac{1}{2}}$, and let $\bm{g}\sim \mathcal{N}(0,\bm{I}_d)$ we have $$\omega\big(\bm{\Sigma}^{\frac{1}{2}}\mathbb{S}^{d-1}\big)=\mathbbm{E}\sup_{\bm{a}\in \mathbb{S}^{d-1}}\langle\bm{g},\bm{\Sigma}^{\frac{1}{2}}\bm{a}\rangle=\mathbbm{E}\big\|\bm{\Sigma}^{\frac{1}{2}}\bm{g}\big\|_2 \leq \sqrt{\mathbbm{E}\|\bm{\Sigma}^{\frac{1}{2}}\bm{g}\|_2^2}=\sqrt{\tr(\bm{\Sigma})}.$$
Thus,   (\ref{4.4}) yields 
\begin{equation}
    \label{111}
    \sup_{\bm{b}\in \bm{\Sigma}^{\frac{1}{2}}\mathbb{S}^{d-1}}\Big|\|\bm{Yb}\|_2-\sqrt{n}\|\bm{b}\|_2\Big|\leq  C\hat{K}\Big[\sqrt{\tr(\bm{\Sigma})}+t\sqrt{\|\bm{\Sigma}\|_{op}}\Big],
\end{equation}
which further implies \begin{equation}
    \begin{aligned}\label{222}
    \sup_{\bm{b}\in \bm{\Sigma}^{\frac{1}{2}}\mathbb{S}^{d-1}}\Big|\|\bm{Yb}\|_2+\sqrt{n}\|\bm{b}\|_2\Big|&\leq \sup_{\bm{b}\in \bm{\Sigma}^{\frac{1}{2}}\mathbb{S}^{d-1}}\Big|\|\bm{Yb}\|_2-\sqrt{n}\|\bm{b}\|_2\Big| +2\sqrt{n}\rad\big(\bm{\Sigma}^{\frac{1}{2}}\mathbb{S}^{d-1}\big)\\
    &\leq  C\hat{K}\Big[\sqrt{\tr(\bm{\Sigma})}+t\sqrt{\|\bm{\Sigma}\|_{op}}\Big]+2 \sqrt{n\|\bm{\Sigma}\|_{op}}.
    \end{aligned}
\end{equation}
Substituting (\ref{111}) and (\ref{222}) into (\ref{4.3}), then taking $t=\sqrt{u}$, some algebra immediately yields the desired bound.

All that remains is to deal with the case when $\rank(\bm{\Sigma}):=r<d$. Let the singular value decomposition be $\bm{\Sigma}=\bm{U}\bm{\Sigma}_0\bm{U}^\top$ for some $\bm{U}\in \mathbb{R}^{d\times r}$ with orthonormal columns and positive definite diagonal $\bm{\Sigma}_0\in \mathbb{R}^{r\times r}$. Then we construct the isotropic and $K$-sub-Gaussian $\bm{Y}_i=\bm{\Sigma}_0^{-\frac{1}{2}}\bm{U}^\top\bm{X}_i$. Note that $$\mathbbm{E}\|(\bm{I}_n-\bm{UU}^\top)\bm{X}_i\|_2^2 = (\bm{I}_n-\bm{UU}^\top)\bm{U\Sigma}_0\bm{U}^\top(\bm{I}_n-\bm{UU}^\top)=0,$$ hence almost surely we have $\bm{X}_i=\bm{UU}^\top\bm{X}_i=\bm{U\Sigma}^{\frac{1}{2}}_0\bm{Y}_i$, where we use $\bm{U}^\top\bm{X}_i = \bm{\Sigma}^{\frac{1}{2}}\bm{Y}_i$ in the second equality. Thus, we have\begin{equation}
    \bm{X}_i=\bm{U}_0\bm{Y}_i,\text{  where }\bm{U}_0=\bm{U}\bm{\Sigma}_0^{\frac{1}{2}}\in \mathbb{R}^{d\times r}.\label{5.5}
\end{equation}     Moreover, we note     $\rad(\bm{U}_0^\top\mathbb{S}^{d-1})\leq \|\bm{U}_0\|_{op} \leq  \|\bm{\Sigma}\|_{op}^{1/2}$ and bound the Gaussian width as (let $\bm{g}\sim \mathcal{N}(0,\bm{I}_d)$) \begin{equation}
    \begin{aligned}\nonumber
        \omega(\bm{U}_0^\top \mathbb{S}^{d-1})&= \mathbbm{E}\sup_{\bm{a}\in \mathbb{S}^{d-1}}\bm{g}^\top\bm{U}_0^\top\bm{a} = \mathbbm{E}\|\bm{U}_0\bm{g}\|_2 \leq \sqrt{\mathbbm{E}\|\bm{U}_0\bm{g}\|_2^2}=\sqrt{\tr(\bm{\Sigma}_0)}= \sqrt{\tr(\bm{\Sigma})}.
    \end{aligned}
\end{equation} With these preparations, one can plug (\ref{5.5}) into the first line of (\ref{4.3}) and run the mechanism again to obtain the  claim.\end{proof}
\subsection*{Appendix B: The Proof of Lemma \ref{lem1}}
To be self-contained, we give a proof for Lemma \ref{lem1}. Our proof is based on the following result from \cite{gray1993dithered}, and similar materials can be found in \cite[Sec. II.B]{chen2022quantizing}. 
\begin{lem}\label{thm6}
    {\rm(Adapted from \cite[Thms. 1-2]{gray1993dithered})} We let $a\in\mathbb{R}$,   $\tau$ be some random dither and quantize $a$ to $\mathcal{Q}_\lambda(a+\tau)$, then we define quantization error as $w=\mathcal{Q}_\lambda(a+\tau)-(a+\tau)$, and quantization noise as $\xi=\mathcal{Q}_\lambda(a+\tau)-a$. We assume that $\tau'\sim \mathscr{U}[-\frac{\lambda}{2},\frac{\lambda}{2}]$ is independent of $\tau$. Denote by $\imath$ the complex unit, then we have the following:
    \begin{itemize}
        \item If $f(u)=\mathbbm{E}(\exp(\imath u\tau))$ satisfies $f(\frac{2\pi l}{\lambda})=0$ for any non-zero integer $l$, then for any $a\in\mathbb{R}$, we have $w\sim \mathscr{U}[-\frac{\lambda}{2},\frac{\lambda}{2}]$;
        \item If $g(u)=\mathbbm{E}(\exp(\imath u\tau))\mathbbm{E}(\exp(\imath u\tau'))$ satisfies $g''(\frac{2\pi l}{\lambda})=0$ for any non-zero integer $l$, then for any $a\in \mathbb{R}$, we have $\mathbbm{E}(\xi^2)=\mathbbm{E}(\tau+\tau')^2$. 
    \end{itemize}
\end{lem}
Now we are ready to prove Lemma \ref{lem1}.

\begin{proof}[The Proof of Lemma \ref{lem1}] Observe that $\sup_{x\in\mathbb{R}}|\mathcal{Q}_\lambda(x)-x|=\frac{\lambda}{2}$, so the boundedness $(i)$ follows from $
    \|\bm{\xi}\|_\infty \leq \|\mathcal{Q}_\lambda(\bm{X}+\bm{\tau})-(\bm{X}+\bm{\tau})\|_\infty + \|\bm{\tau}\|_\infty\leq \frac{\lambda}{2}+\lambda=\frac{3\lambda}{2}.$ 
For (ii) $\mathbbm{E}[\bm{\xi}]=0$ and  (iv) $\mathbbm{E}[\bm{\xi}\bm{\xi}^\top]=\frac{\lambda^2}{4}\bm{I}_d$, it suffices to show that they hold under a fixed $\bm{X}\in \mathbb{R}^d$. We let $\bm{w}=\mathcal{Q}_\lambda(\bm{X}+\bm{\tau})-(\bm{X}+\bm{\tau})$ be the quantization error and decompose the quantization noise as $\bm{\xi}=\bm{\tau}+\bm{w}$.

We now verify that a triangular dither $\tau=\tau_1+\tau_2$ with independent $\tau_1,\tau_2\sim \mathscr{U}[-\frac{\lambda}{2},\frac{\lambda}{2}]$ satisfies both conditions in Lemma \ref{thm6}. First, we calculate that \begin{equation}
    f(u)=\mathbbm{E}(\exp(\imath u\tau))=\mathbbm{E}(\exp(\imath u\tau_1)\exp(\imath u\tau_2))=[\mathbbm{E}(\exp(\imath u\tau_1))]^2,\label{5.6}
\end{equation} and moreover, \begin{equation}
    \label{5.7}
    \mathbbm{E}(\exp(\imath u\tau_1))=\int_{-\lambda/2}^{\lambda/2}\lambda^{-1}[\cos(u\tau_1)+\imath\sin(u\tau_1)]~\mathrm{d}\tau_1 = \frac{2}{\lambda u}\sin\frac{\lambda u}{2},
\end{equation} which vanishes at $u=\frac{2\pi l}{\lambda}$ for non-zero integer $l$. Thus, Lemma \ref{thm6} implies that entries of $\bm{w}$ follow $\mathscr{U}[-\frac{\lambda}{2},\frac{\lambda}{2}]$, which leads to $\mathbbm{E}(\bm{\xi} )=\mathbbm{E}(\bm{\tau} )+\mathbbm{E}(\bm{w})=0$. Second, we let $\tau'\sim \mathscr{U}[-\frac{\lambda}{2},\frac{\lambda}{2}]$ be independent of $\{\tau_1,\tau_2\}$, then using (\ref{5.6}) and (\ref{5.7}) we obtain $$g(u)=f(u)\mathbbm{E}(\exp(\imath u\tau'))=\Big(\frac{2}{\lambda u}\sin(\frac{\lambda u}{2})\Big)^3,$$ which satisfies $g''(\frac{2\pi l}{\lambda})=0$ for any non-zero integer $l$. Thus, let $\xi_i$ be the $i$-th entry of $\bm{\xi}$, Lemma \ref{thm6} implies that $$\mathbbm{E}(\xi_i^2)=\mathbbm{E}(\tau+\tau')^2=\mathbbm{E}(\tau)^2+\mathbbm{E}(\tau')^2=\frac{\lambda^2}{4}.$$ For non-diagonal entries,  $\xi_i$ and $\xi_j$ are independent if $i\neq j$, and so $\mathbbm{E}(\xi_i\xi_j )=\mathbbm{E}(\xi_i )\mathbbm{E}(\xi_j)=0.$ Overall, we have shown that $\mathbbm{E}(\bm{\xi}\bm{\xi}^\top)=\frac{\lambda^2}{4}\bm{I}_d$.

Finally, we prove (iii) $\|\bm{\xi}\|_{\psi_2}=O(\lambda)$ by using $\|X\|_{\psi_2}\asymp \sup_{p\geq 1}\frac{[\mathbbm{E}|X|^p]^{1/p}}{\sqrt{p}}$ (e.g., \cite[Prop. 2.5.2]{vershynin2018high}). Under fixed $\bm{X}$, we have shown   $\|\bm{\xi}\|_\infty\leq \frac{3\lambda}{2}$ and $\mathbbm{E}(\bm{\xi})=0$. Combined with the fact that $\bm{\xi}$ have independent entries, we have $\|\bm{\xi}\|_{\psi_2}=O(\lambda)$ \cite[Lem. 3.4.2]{vershynin2018high}. Thus, we have  $\mathbbm{E}\big[|\langle \bm{v},\bm{\xi}\rangle|^p\big|\leq (C\lambda\sqrt{p})^p$ for some absolute constant $C$ and for any $\bm{v}\in \mathbb{S}^{d-1}$. Further averaging over $\bm{X}$, we obtain $\sup_{\bm{v}\in \mathbb{S}^{d-1}}\mathbbm{E}|\langle \bm{v},\bm{\xi}\rangle|^p\leq (C\lambda\sqrt{p})^p$, which leads to $\|\bm{\xi}\|_{\psi_2}=\sup_{\bm{v}\in \mathbb{S}^{d-1}}\|\langle\bm{v},\bm{\xi}\rangle\|_{\psi_2}=O(\lambda)$. The proof is complete.  \end{proof}
\subsection*{{\color{black}Appendix C: $\widetilde{\bm{\Sigma}}_{on}$ for Online Setting}}
 This appendix collects some missing details in the analysis of our estimator $\widetilde{\bm{\Sigma}}_{on}$ for the online setting. First, we shall show that $\tilde{\lambda}_j$'s  in (\ref{ondither}), which involves a tuning parameter $C(K)$ and $n_0=O_K(\log(nd))$ full samples, satisfy (\ref{desir}). Due to the high-probability event $\{\lambda_{j,\min}\lesssim K\sqrt{\Sigma_{jj}\log(nd)},~\forall j\in[d]\}$ (see (\ref{3.18})),    it suffices to show $\tilde{\lambda}_j \asymp _K \sqrt{\Sigma_{jj}\log(nd)}$ for any $j\in[d]$, and thus it is also enough to show 
 $
     \frac{1}{n_0}\sum_{i=1}^{n_0}X_{ij}^2\asymp \Sigma_{jj}$ for $j\in[d]$. The phenomenon that one can accurately estimate the {\it   (diagonal) entries} of a covariance matrix from {\it logarithmically many} samples is a direct outcome of Bernstein's inequality (e.g., \cite[Thm. 2.8.1]{vershynin2018high}). To see this, by $\|X_{ij}^2\|_{\psi_1}\le \|X_{ij}\|_{\psi_2}^2\le K^2 \Sigma_{jj}$,  Bernstein's inequality gives 
     \begin{align*}
         \mathbbm{P}\Big(\Big|\frac{1}{n_0}\sum_{i=1}^{n_0}X_{ij}^2 - \Sigma_{jj}\Big|\ge t\Big) \le 2\exp\Big(-cn_0 \min\Big\{\frac{t^2}{K^4\Sigma_{jj}^2},\frac{t}{K^2\Sigma_{jj}}\Big\}\Big),\quad \forall t>0,~j=1,...,d.
     \end{align*}
     Setting $t = \frac{1}{2}\Sigma_{jj}$ along with a union bound over $j\in[d]$ yields that, under $n_0 = CK^4 \log(nd)=O_K(\log(nd))$, with high probability $\frac{1}{2}\Sigma_{jj}\le\frac{1}{n_0}\sum_{i=1}^{n_0}X_{ij}^2\le \frac{3}{2}\Sigma_{jj}$ holds for any $j\in[d]$, as desired.

     Next, we outline several key steps in the analysis of  $\widetilde{\bm{\Sigma}}_{on}$ as per (\ref{onlineest}). For $i\ge n_0+1$, under (\ref{desir}) we have $\dot{\bm{X}}_i = \bm{\mathcal{Q}}_{\widetilde{\bm{\Lambda}},\tb}(\bm{X}_i+\widetilde{\bm{\Lambda}}\bm{\tau}_i)= \widetilde{\bm{\Lambda}}\mathcal{Q}_1(\widetilde{\bm{\Lambda}}^{-1}\bm{X}_i+\bm{\tau}_i)$, and by defining the quantization noise $\bm{\xi}_i = \mathcal{Q}_1(\widetilde{\bm{\Lambda}}^{-1}\bm{X}_i+\bm{\tau}_i)-\widetilde{\bm{\Lambda}}^{-1}\bm{X}_i$ we have $\dot{\bm{X}}_i = \bm{X}_i+ \widetilde{\bm{\Lambda}}\bm{\xi}_i$. Therefore, we have 
     \begin{equation}
         \begin{aligned}
             \widetilde{\bm{\Sigma}}_{on}-\bm{\Sigma} &= \frac{1}{n}\Big[\sum_{i=1}^{n_0}\bm{X}_i\bm{X}_i^\top + \sum_{i=n_0+1}^n (\bm{X}_i+\widetilde{\bm{\Lambda}}\bm{\xi}_i)(\bm{X}_i+\widetilde{\bm{\Lambda}}\bm{\xi}_i)^\top\Big] -\frac{n-n_0}{4n}\widetilde{\bm{\Lambda}}^2 - \bm{\Sigma}\\
             & = \Big(\frac{1}{n}\sum_{i=1}^n \bm{X}_i\bm{X}_i^\top-\bm{\Sigma}\Big) + \frac{n-n_0}{n}\Big(\frac{1}{n-n_0}\sum_{i=n_0+1}^n\widetilde{\bm{\Lambda}}\bm{\xi}_i\bm{\xi}_i^\top\widetilde{\bm{\Lambda}}^\top-\frac{1}{4}\widetilde{\bm{\Lambda}}^2\Big) \\
             & + \frac{n-n_0}{n}\Big(\frac{1}{n-n_0}\sum_{i=n_0+1}^n \big(\bm{X}_i\bm{\xi}_i^\top\widetilde{\bm{\Lambda}}+ \widetilde{\bm{\Lambda}}\bm{\xi}_i\bm{X}_i^\top\big)\Big) : = \tilde{I}_1+ \frac{n-n_0}{n}\tilde{I}_2+ \frac{n-n_0}{n}\tilde{I}_3
         \end{aligned}
     \end{equation}
     and reach $\|\widetilde{\bm{\Sigma}}_{on}-\bm{\Sigma}\|_{op}\le \|\tilde{I}_1\|_{op}+ \|\tilde{I}_2\|_{op}+ \|\tilde{I}_3\|_{op}.$ All that remains is to bound $\|\tilde{I}_k\|_{op}$ ($k=1,2,3$) by the same arguments for bounding $\|I_k\|_{op}$ in (\ref{4.44}).

\bibliographystyle{plain}
\bibliography{libr}
\end{document}